\definecolor{ForestGreen}{RGB}{34,139,34}
\definecolor{goldenrod}{rgb}{0.85, 0.65, 0.13}
\newtheorem{assumption}{Assumption}
\newtheorem{proposition}{Proposition}
\newtheorem{definition}{Definition}
\renewcommand*{\backrefalt}[4]{%
    \ifcase #1 \footnotesize{(Not cited.)}%
    \or        \footnotesize{(p.~#2)}%
    \else      \footnotesize{(pp.~#2)}%
    \fi}
\providecommand{\customgenericname}{}
\newcommand{\newcustomtheorem}[2]{%
  \newenvironment{#1}[1]
  {%
   \renewcommand\customgenericname{#2}%
   \renewcommand\theinnercustomgeneric{##1}%
   \innercustomgeneric
  }
  {\endinnercustomgeneric}
}
\newcommand*\diff{\mathrm{d}}
\newcommand*\Image{\mathrm{Im}}
\newcommand*\X{\mathcal{X}}
\newcommand*\G{\mathcal{G}}
\newcommand*\D{\mathcal{D}}
\newcommand*\F{\mathcal{F}}
\newcommand*\W{\mathcal{W}}
\newcommand*\Loss{\mathcal{L}}
\newcommand*{\Etr}{\smash{\mathcal{E}_{\scriptsize\textrm{tr}}}}
\newcommand*{\Ead}{\smash{\mathcal{E}_{\scriptsize\textrm{ad}}}}
\newcommand*{\eg}{e.g.\@\xspace}
\newcommand*{\sut}{s.t.\@\xspace}
\newcommand*{\ie}{i.e.\@\xspace}
\newcommand*{\iid}{i.i.d.\@\xspace}
\newcommand*{\wrt}{w.r.t.\@\xspace}
\newcommand*{\aka}{a.k.a.\@\xspace}
\newcommand*{\etc}{etc.\@\xspace}
\newcommand*{\cf}{cf.\@\xspace}
\newtheorem{lemma}{Lemma}
\let\originalleft\left
\let\originalright\right
\renewcommand{\left}{\mathopen{}\mathclose\bgroup\originalleft}
\renewcommand{\right}{\aftergroup\egroup\originalright}
\def\eqref#1{eq.~\ref{#1}}
\def\1{\bm{1}}
\def\vzero{{\bm{0}}}
\def\E{{\mathcal{E}}}
\DeclareMathAlphabet{\mathsfit}{\encodingdefault}{\sfdefault}{m}{sl}
\SetMathAlphabet{\mathsfit}{bold}{\encodingdefault}{\sfdefault}{bx}{n}
\def\gX{{\mathcal{X}}}
\newcommand{\R}{\mathbb{R}}
\newcommand{\Span}{\mathrm{Span}}
\Crefname{assumption}{Assumption}{Assumptions}
\Crefname{lemma}{Lemma}{Lemmas}
\Crefname{definition}{Definition}{Definitions}
\newcommand{\mytag}[2]{%
  \text{#1}%
  \@bsphack
  \begingroup
    \@onelevel@sanitize\@currentlabelname
    \edef\@currentlabelname{%
      \expandafter\strip@period\@currentlabelname\relax.\relax\@@@%
    }%
    \protected@write\@auxout{}{%
      \string\newlabel{#2}{%
        {#1}%
        {\thepage}%
        {\@currentlabelname}%
        {\@currentHref}{}%
      }%
    }%
  \endgroup
  \@esphack
}
\newcommand{\ours}{CoDA\@\xspace}
\newcommand{\ourslone}{\ours-$\smash{\ell_1}$\@\xspace}
\newcommand{\oursltwo}{\ours-$\smash{\ell_2}$\@\xspace}
\begin{document}

\twocolumn[
\icmltitle{Generalizing to New Physical Systems via Context-Informed Dynamics Model} 


\icmlsetsymbol{equal}{*}

\begin{icmlauthorlist}
\icmlauthor{Matthieu Kirchmeyer}{isir,criteo,equal}
\icmlauthor{Yuan Yin}{isir,equal} \\
\icmlauthor{Jérémie Donà}{isir} 
\icmlauthor{Nicolas Baskiotis}{isir}
\icmlauthor{Alain Rakotomamonjy}{criteo,rouen}
\icmlauthor{Patrick Gallinari}{isir,criteo}
\end{icmlauthorlist}

\icmlaffiliation{isir}{CNRS-ISIR, Sorbonne University, Paris, France}
\icmlaffiliation{criteo}{Criteo AI Lab, Paris, France}
\icmlaffiliation{rouen}{Université de Rouen, LITIS, France}
\icmlcorrespondingauthor{Matthieu Kirchmeyer}{matthieu.kirchmeyer@sorbonne-universite.fr}
\icmlcorrespondingauthor{Yuan Yin}{yuan.yin@sorbonne-universite.fr}


\vskip 0.3in
]



\printAffiliationsAndNotice{\icmlEqualContribution} 

\linepenalty=1000
\everypar{\looseness=-1}

\begin{abstract}
Data-driven approaches to modeling physical systems fail to generalize to unseen systems that share the same general dynamics with the learning domain, but correspond to different physical contexts. 
We propose a new framework for this key problem, context-informed dynamics adaptation (CoDA), which takes into account the distributional shift across systems for fast and efficient adaptation to new dynamics.
CoDA leverages multiple environments, each associated to a different dynamic, and learns to condition the dynamics model on contextual parameters, specific to each environment.
The conditioning is performed via a hypernetwork, learned jointly with a context vector from observed data.
The proposed formulation constrains the search hypothesis space for fast adaptation and better generalization across environments with few samples. 
We theoretically motivate our approach and show state-of-the-art generalization results on a set of nonlinear dynamics, representative of a variety of application domains.
We also show, on these systems, that new system parameters can be inferred from context vectors with minimal supervision.
\end{abstract}

\section{Introduction}
\label{sec:intro}
Neural Network (NN) approaches to modeling dynamical systems have recently raised the interest of several communities leading to an increasing number of contributions. 
This topic was explored in several domains, ranging from simple dynamics \eg Hamiltonian systems \citep{Greydanus2019, Chen2019} to more complex settings \eg fluid dynamics \citep{Kochkov2021,Li2021, Wandel2020}, earth system science and climate \citep{Reichstein2019}, or health \citep{Fresca2020}. 
NN emulators are attractive as they may for example provide fast and low cost approximations to complex numerical simulations \citep{Duraisamy2019, Kochkov2021}, complement existing simulation models when the physical law is partially known \cite{Yin2021Aphynity} or even offer solutions when classical solvers fail \eg with very high number of variables \citep{sirignano2018}.

A model of a real-world dynamical system should account for a wide range of contexts resulting from different external forces, spatio-temporal conditions, boundary conditions, sensors characteristics or system parameters. 
These contexts characterize the dynamics phenomenon.
For instance, in cardiac electrophysiology \citep{Neic2017,Fresca2020}, each patient has its own specificities and represents a particular context. 
In the study of epidemics' diffusion \citep{Shaier2021}, computational models should handle a variety of spatial, temporal or even sociological contexts.
The same holds for most physical problems, \eg forecasting of spatial-location-dependent dynamics in climate \citep{Debezenac2018}, fluid dynamics prediction under distinct external forces \citep{Li2021}, \etc

The physics approach for modeling dynamical systems relies on a strong prior knowledge about the underlying phenomenon. This provides a causal mechanism  which is embedded in a physical dynamics model, usually a system of differential equations, and allows the physical model to handle a whole set of contexts. 
Moreover, it is often possible to adapt the model to new or evolving situations, \eg via data assimilation \citep{kalman1960new,courtier1994strategy}.

In contrast, Expected Risk Minimization (ERM) based machine learning (ML) fails to generalize to unseen dynamics. Indeed, it requires \iid data for training and inference while dynamical observations are non-\iid as the distributions change with initial conditions or physical contexts. 

Thus any ML framework that handles this question should consider other assumptions. 
A common one used \eg in domain generalization \citep{Wang2021DG}, states that data come from several environments \aka domains, each with a different distribution.
Training is performed on a sample of the environments and test corresponds to new ones.
Domain generalization methods attempt to capture problem invariants via a unique model, assuming that there exists a representation space suitable for all the environments. 
This might be appropriate for classification, but not for dynamical systems where the underlying dynamics differs for each environment.
For this problem, we need to learn a function that adapts to each environment, based on a few observations, instead of learning a single domain-invariant function.
This is the objective of meta-learning \citep{Thrun1998}, a general framework for fast adaptation to unknown contexts.
The standard gradient-based methods (\eg \citealp{Finn2017}) are unsuitable for complex dynamics due to their bi-level optimization and are known to overfit when little data is available for adaptation, as in the few-shot learning setting explored in this paper \citep{Mishra2018}. 
Like invariant methods, meta-learning usually handles basic tasks \eg classification; regression on static data or simple sequences and not challenging dynamical systems.

Generalization for modeling real-world dynamical systems is a recent topic. 
Simple simulated dynamics were considered in Reinforcement Learning \citep{Lee2020RL,Clavera2018} while physical dynamics were modeled in recent works \citep{Yin2021LEADS,Wang2021c}.
These approaches consider either simplified settings or additional hypotheses \eg prior knowledge and do not offer general solutions to our adaptation problem (details in \Cref{sec:related_work}).

We propose a new ML framework for generalization in dynamical systems, called \textbf{Co}ntext-Informed \textbf{D}ynamics \textbf{A}daptation (\ours).
Like in domain generalization, we assume availability of several environments, each with its own specificity, yet sharing some physical properties. 
Training is performed on a sample of the environments. 
At test time, we assume access to example data from a new environment, here a trajectory. 
Our goal is to adapt to the new environment distribution with this trajectory. 
More precisely, \ours assumes that the underlying system is described by a parametrized differential equation, either an ODE or a PDE. 
The environments share the parametrized form of the equation but differ by the values of the parameters or initial conditions.
\ours conditions the dynamics model on learned environment characteristics \aka contexts and generalizes to new environments and trajectories with few data.
Our main contributions are the following:
\begin{itemize}
    \item We introduce a multi-environment formulation of the generalization problem for dynamical systems.
    \item We propose a novel context-informed framework, \ours, to this problem. It conditions the dynamics model on context vectors via a hypernetwork.
    \ours introduces a locality and a low-rank constraint, which enable fast and efficient adaptation with few data. 
    \item We analyze theoretically the validity of our low-rank adaptation setting for modeling dynamical systems.
    \item We evaluate two variations of \ours on several ODEs/PDEs representative of a variety of application domains, \eg chemistry, biology, physics. \ours achieves SOTA generalization results on in-domain and one-shot adaptation scenarios. We also illustrate how, with minimal supervision, \ours infers accurately new system parameters from learned contexts.
\end{itemize}
The paper is organized as follows. 
In \Cref{sec:dg_dyn}, we present our multi-environment problem. 
In \Cref{sec:conda_framework}, we introduce the \ours framework.
In \Cref{sec:implementation}, we detail how to implement our framework.
In \Cref{sec:experiments}, we present our experimental results.
In \Cref{sec:related_work}, we present related work.

\section{Generalization for Dynamical Systems}
\label{sec:dg_dyn}
We present our generalization problem for dynamical systems, then introduce our multi-environment formalization. 

\subsection{Problem Setting}
\label{subsec:pb_setting}
We consider dynamical systems that are driven by unknown temporal differential equations of the form:
\begin{equation}
    \frac{\diff x(t)}{\diff t} = f(x(t)), \label{eq:diffeq}%
\end{equation}
where $t \in \mathbb{R}$ is a time index, $x(t)$ is a time-dependent state in a space $\X$ and $f:\X\to T\X$ a function that maps $x(t)\in\X$ to its temporal derivatives in the tangent space $T\X$. 
$f$ belongs to a class of vector fields $\F$. 
$\X\subseteq\mathbb{R}^d$ ($d\in\mathbb{N}^\star$) for ODEs or $\X$ is a space of functions defined over a spatial domain (\eg 2D or 3D Euclidean space) for PDEs.

Functions $f \in \F$ define a space $\smash{\D^f\!(\X)}$ of state trajectories $x: I\to \X$, mapping $t$ in an interval $I$ including $0$, to the state $x(t)\in\X$. 
Trajectories are defined by the initial condition $x(0)\triangleq x_0\sim p(X_0)$ and take the form:%
\begin{equation}
    \forall t\in I, x(t) = x_0 + \int_{0}^{t}f(x(\tau))\diff \tau \in \X \label{eq:inteq}%
\end{equation}
In the following, we assume that $f\in\F$ is parametrized by some unknown attributes \eg physical parameters, external forcing terms which affect the trajectories. 

\subsection{Multi-Environment Learning Problem}
\label{subsec:multi_env_pb}
We propose to learn the class of functions $\F$ with a data-driven \textit{dynamics model} $\smash{g_\theta}$ parametrized by $\smash{\theta\in\mathbb{R}^{d_\theta}}$. 
Given $f\!\in\!\F$, we observe $N$ trajectories in $\smash{\D^f\!(\X)}$ (\cf \Cref{eq:inteq}).

The standard ERM objective considers that all trajectories are \iid 
Here, we propose a multi-environment learning formulation where observed trajectories of $f$ form an environment $e\in\E$.
We denote $f^e$ and $\D^e$ the corresponding function and set of $N$ trajectories.
We assume that we observe training environments $\Etr$, consisting of several trajectories from a set of known functions $\smash{\{f^e\}_{e \in \Etr}}$.

The goal is to learn $\smash{g_\theta}$ that adapts easily and efficiently to new environments $\Ead$, corresponding to unseen functions $\smash{\{f^e\}_{e \in \Ead}}$ (``ad'' stands for adaptation).
We define $\forall e\in\E$ the Mean Squared Error (MSE) loss, over $\smash{\D^e}$ as
\begin{equation}
    \Loss(\theta, \D^e) \triangleq \sum_{i=1}^N  \int_{t\in I}\|f^e(x^{e,i}(t))-g_\theta(x^{e,i}(t))\|^{2}_2\diff t \label{eq:Le}
\end{equation}
In practice, $f^e$ is unavailable and we can only approximate it from discretized trajectories. 
We detail later in \Cref{eq:Le_int} our approximation method based on an integral formulation.
It fits observed trajectories directly in state space.

\section{The \ours Learning Framework}
\label{sec:conda_framework}
We introduce \ours, a new context-informed framework for learning dynamics on multiple environments.
It relies on a general adaptation rule (\Cref{subsec:adapt_rule}) and introduces two key properties: locality, enforced in the objective (\Cref{subsec:locality_optim}) and low-rank adaptation, enforced in the proposed model via hypernetwork-decoding (\Cref{subsec:model}).
The validity of this framework for dynamical systems is analyzed in \Cref{subsec:validity} and its benefits are discussed in \Cref{subsec:discussion}.

\subsection{Adaptation Rule}
\label{subsec:adapt_rule}
The dynamics model $g_\theta$ should adapt to new environments.
Hence, we propose to condition $g_\theta$ on observed trajectories $\D^e, \forall e\in\E$.
Conditioning is performed via an \textit{adaptation network} $\smash{A_\pi}$, parametrized by $\pi$, which adapts the weights of $\smash{g_\theta}$ to an environment $e\in\E$ according to
\begin{equation}
    \smash{\theta^e \triangleq A_\pi(\D^e) \triangleq \theta^c + \delta\theta^e,\quad\pi\triangleq\{\theta^c, \{\delta\theta^e\}_{e\in\E}\}} \label{eq:adaptation_rule}
\end{equation}
$\theta^c\in\mathbb{R}^{d_\theta}$ are shared parameters, used as an initial value for fast adaptation to new environments.
$\delta\theta^e\in\mathbb{R}^{d_\theta}$ are environment-specific parameters conditioned on $\D^e$.

\subsection{Constrained Optimization Problem}
\label{subsec:locality_optim}
Given the adaptation rule in \Cref{eq:adaptation_rule}, we introduce a constrained optimization problem which learns parameters $\pi$ such that $\forall e\in\E, \delta\theta^e$ is small and $g$ fits observed trajectories.
It introduces a locality constraint with a norm $\|\cdot\|$:
\begingroup
\setlength\abovedisplayskip{5pt}
\setlength\belowdisplayskip{5pt}
\begin{equation*}
    \min_{\pi}\sum_{e\in \E} \|\delta\theta^e\|^2 \text{~\sut~} \forall x^e(t) \in \D^e, \frac{\diff x^e(t)}{\diff t} = g_{\theta^c+\delta\theta^e}(x^e(t))
\end{equation*}
\endgroup
We consider an approximation of this problem which relaxes the equality constraint with the MSE loss $\Loss$ in \Cref{eq:Le}.
\begingroup
\setlength\abovedisplayskip{5pt}
\setlength\belowdisplayskip{5pt}
\begin{equation}
    \min_{\pi} \sum_{e\in \E} \Big(\Loss(\theta^c+\delta\theta^e, \D^e) + \lambda \|\delta\theta^e\|^2\Big) \label{eq:arm_relaxed}
\end{equation}
\endgroup
$\lambda$ is a hyperparameter.
For training, we minimize \Cref{eq:arm_relaxed} \wrt $\pi$ over training environments $\Etr$.
After training, $\theta^c$ is freezed.
For adaptation, we minimize \Cref{eq:arm_relaxed} over new environments $\Ead$ \wrt $\smash{\{\delta\theta^e\}_{e\in\Ead}}$.

The locality constraint in the training objective \Cref{eq:arm_relaxed} enforces $\delta\theta^e$ to remain close to the shared $\theta^c$ solutions. 
It plays several roles.
First, it fosters fast adaptation by acting as a constraint over $\theta^c\in\mathbb{R}^{d_\theta}$ during training \sut minimas $\smash{\{{\theta^e}^\star\}_{e \in \E}}$ are in a neighborhood of $\theta^c$ \ie can be reached from $\smash{\theta^c}$ with few update steps.
Second, it constrains the hypothesis space at fixed $\theta^c$. Under some assumptions, it can simplify the resolution of the optimization problem \wrt $\delta\theta^e$ by turning optimization to a quadratic convex problem with an unique solution.
We show this property for our solution in \Cref{prop:optimal_context}.
The positive effects of this constraint will be illustrated on an ODE system in \Cref{subsec:model}.

\subsection{Context-Informed Hypernetwork}
\label{subsec:model}
\begin{figure}[t!]
    \centering
    \includegraphics[width=\linewidth]{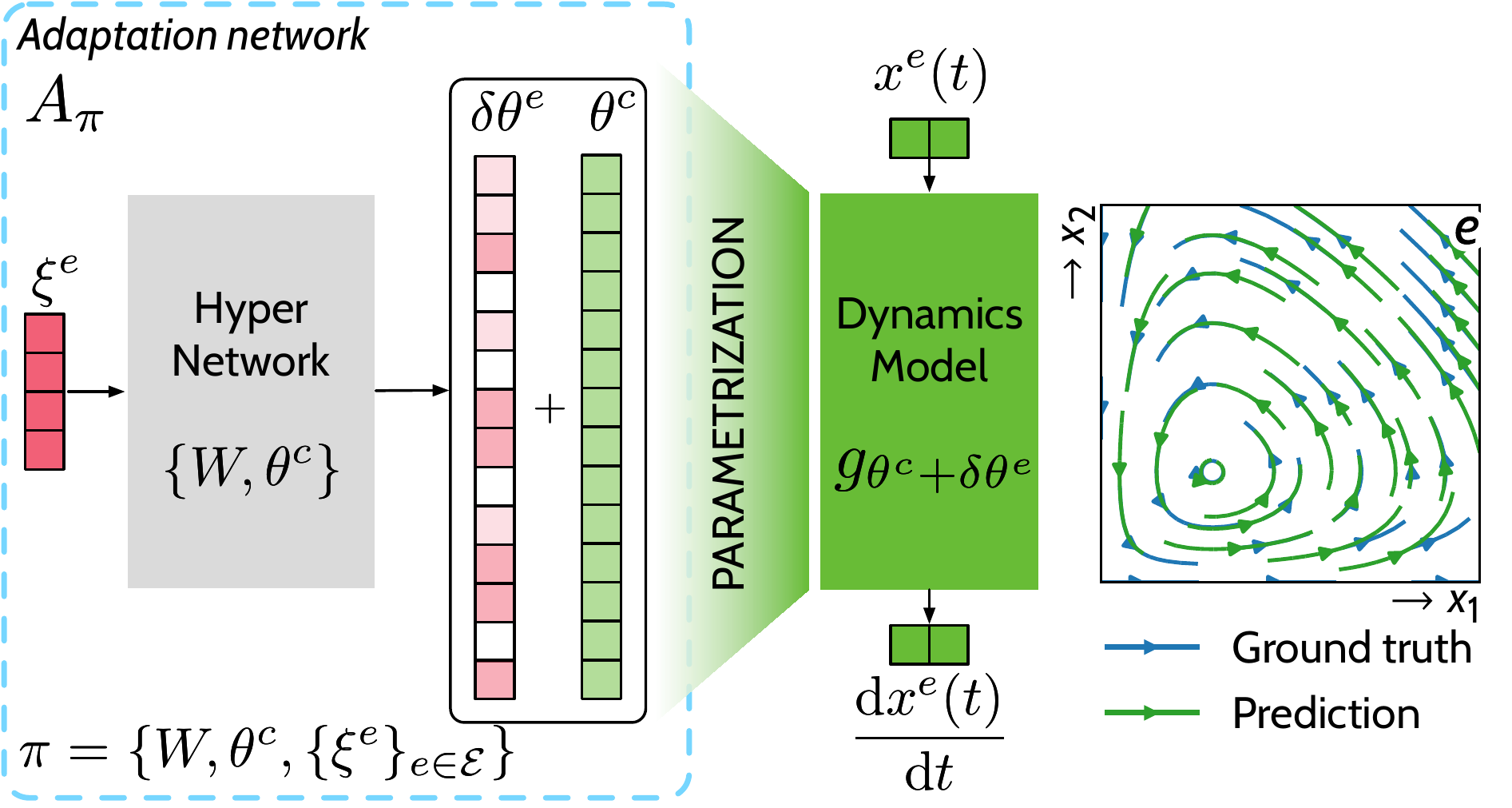}
    \vspace{-1.5em}
    \caption{Context-Informed Dynamics Adaptation (\ours).}%
    \label{fig:illus_coda}%
    \vspace{-1em}
\end{figure}
\Cref{eq:arm_relaxed} involves learning $\delta\theta^e$ for each environment.
For adaptation, $\delta\theta^e$ should be inferred from few observations of the new environment. 
Learning such high-dimensional parameters is prone to over-fitting, especially in low data regimes. 
We propose a hypernetwork-based solution (\Cref{fig:illus_coda}) to solve efficiently this problem.
It operates on a low-dimensional space, yields fixed-cost adaptation and shares efficiently information across environments.

\paragraph{Formulation}
We estimate $\delta\theta^e$ through a linear mapping of conditioning information, called context, learned from $\D^e$ and denoted $\xi^e\in\mathbb{R}^{d_\xi}$.
$W=(W_1, \cdots, W_{d_\xi})\in\mathbb{R}^{d_\theta \times d_\xi}$ is the weight matrix of the linear decoder \sut
\begin{equation}
    A_{\pi}(\D^e) \triangleq \theta^c + W\xi^e,\quad\pi\triangleq\{W, \theta^c, \{\xi^e\}_{e\in\E}\} \label{eq:adaptation_rule_ctx}
\end{equation}
$W$ is shared across environments and defines a low-dimensional subspace $\W\triangleq\Span(W_1, ..., W_{d_\xi})$, of dimension at most $d_\xi$, to which the search space of $\delta\theta^e$ is restricted.
$\xi^e$ is specific to each environment and can be interpreted as learning rates along the rows of $W$.
In our experiments, $d_\xi \ll d_\theta$ is small, at most 2. 
Thus, \textit{adaptation to new environments only requires to learn very few parameters, which define a completely new dynamics model $g$}.

$A_\pi$ corresponds to an affine mapping of $\xi^e$ parametrized by $\{W, \theta^c\}$, \aka a linear hypernetwork. 
Note that hypernetworks \citep{Ha2017} have been designed to handle single-environment problems and learn a separate context per layer. Our formalism involves multiple environments and defines a context per environment for all layers of $g$.

Linearity of the hypernetwork is not restrictive as contexts are directly learned through an inverse problem detailed in \cref{eq:coda-tr,eq:coda-ts}, \sut expressivity is similar to a nonlinear hypernetwork with a final linear activation.

\paragraph{Objectives}
We derive the training and adaptation objectives by inserting \Cref{eq:adaptation_rule_ctx} into \Cref{eq:arm_relaxed}.
For training, both contexts and hypernetwork are learned with \Cref{eq:coda-tr}:
\begin{equation}
   {\min_{\theta^c, W, \{\xi^e\}_{e\in\Etr}} \sum_{e\in\Etr}\! \Big(\Loss(\theta^c + W\xi^e, \D^e)+\lambda \|W \xi^e\|^2\Big)} \label{eq:coda-tr}%
\end{equation}
After training, $\theta^c$ is kept fixed and for adaptation to a new environment, only the context vector $\xi^e$  is learned with:
\begin{equation}
    {\min_{\{\xi^e\}_{e\in\Ead}}~ \sum_{e\in \Ead} \Big(\Loss(\theta^c + W\xi^e, \D^e)+\lambda \|W \xi^e\|^2\Big)} \label{eq:coda-ts}
\end{equation}
Implementation of \cref{eq:coda-tr,eq:coda-ts} is detailed in \Cref{sec:implementation}.
We apply gradient descent.
In \Cref{prop:optimal_context}, we show for $\|\cdot\|=\ell_2$, that \Cref{eq:coda-ts} admits an unique solution, recovered from initialization at $\vzero$ with a single preconditioned gradient step, projected onto subspace $\W$ defined by $W$.
\begin{proposition}[Proof in \Cref{app:proof}]\label{prop:optimal_context}
   Given $\{\theta^c,W\}$ fixed, if $\|\cdot\|{}=\ell_2$, then \Cref{eq:coda-ts} is quadratic. 
   If $\lambda^\prime W^\top W$ or $\bar{H}^e(\theta^c)=W^\top \nabla^2_\theta \Loss(\theta^c, \D^e) W$ are invertible then $\bar{H}^e(\theta^c)+\lambda^\prime W^\top W$ is invertible except for a finite number of $\lambda^\prime$ values.
   The problem in \Cref{eq:coda-ts} is then also convex and admits an unique solution, $\{{\xi^e}^\star\}_{e\in\Ead}$.
   With $\lambda^\prime \triangleq 2 \lambda$,
    \begin{equation}
        \smash{{\xi^e}^* = -\Big({\bar{H}^e(\theta^c)+\lambda^\prime W^\top W}\Big)^{-1} W^\top \nabla_\theta\Loss(\theta^c, \D^e)} \label{eq:optimal_context}
    \end{equation}
\end{proposition}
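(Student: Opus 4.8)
The plan is to analyze the adaptation objective in \Cref{eq:coda-ts} as a function of a single context vector $\xi^e$ with $\{\theta^c, W\}$ held fixed, and show it reduces to minimizing a quadratic. First I would isolate the per-environment term $\phi^e(\xi^e)\triangleq\Loss(\theta^c+W\xi^e,\D^e)+\lambda\|W\xi^e\|_2^2$ and Taylor-expand $\Loss(\theta^c+W\xi^e,\D^e)$ around $\theta^c$. The key observation is that $\Loss(\theta,\D^e)$ defined in \Cref{eq:Le} is itself \emph{quadratic} in $\theta$ when $g_\theta$ is affine in $\theta$ — and with the linear hypernetwork $\theta=\theta^c+W\xi^e$, the composition $\xi^e\mapsto g_{\theta^c+W\xi^e}(x)$ is affine in $\xi^e$ for each $x$, hence $\|f^e(x)-g_{\theta^c+W\xi^e}(x)\|_2^2$ is a quadratic form in $\xi^e$, and so is the integral/sum in \Cref{eq:Le}. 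Therefore the second-order Taylor expansion of $\Loss$ in the direction $W\xi^e$ is \emph{exact}: $\Loss(\theta^c+W\xi^e,\D^e)=\Loss(\theta^c,\D^e)+\xi^{e\top}W^\top\nabla_\theta\Loss(\theta^c,\D^e)+\tfrac12\xi^{e\top}W^\top\nabla_\theta^2\Loss(\theta^c,\D^e)W\xi^e$, which is where $\bar H^e(\theta^c)=W^\top\nabla_\theta^2\Loss(\theta^c,\D^e)W$ enters. Adding the regularizer $\lambda\|W\xi^e\|_2^2=\lambda\,\xi^{e\top}W^\top W\xi^e$ gives a quadratic in $\xi^e$ with Hessian $\bar H^e(\theta^c)+\lambda'W^\top W$ (taking $\lambda'=2\lambda$ to absorb the factor of $2$ from differentiating the quadratic twice) and linear term $W^\top\nabla_\theta\Loss(\theta^c,\D^e)$.

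Next I would address the invertibility and convexity claim. The matrix $M^e\triangleq\bar H^e(\theta^c)+\lambda'W^\top W$ is $d_\xi\times d_\xi$ and symmetric. Since $\nabla_\theta^2\Loss(\theta^c,\D^e)$ is a sum of outer-product Hessians of squared residuals it is positive semidefinite, so $\bar H^e(\theta^c)=W^\top\nabla_\theta^2\Loss\,W\succeq 0$; likewise $W^\top W\succeq 0$. Thus $M^e\succeq 0$ always, giving convexity of $\phi^e$ for free — the problem in \Cref{eq:coda-ts} is convex. For the unique-solution part I would argue that $\det(\bar H^e(\theta^c)+\lambda'W^\top W)$, viewed as a polynomial in the scalar $\lambda'$, is not identically zero provided either $W^\top W$ is invertible (then the leading coefficient in $\lambda'^{\,d_\xi}$ is $\det(W^\top W)\neq 0$) or $\bar H^e(\theta^c)$ is invertible (then the constant term is $\det\bar H^e(\theta^c)\neq 0$); a nonzero univariate polynomial has finitely many roots, so $M^e$ is invertible for all but finitely many $\lambda'$. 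When $M^e$ is invertible, $\phi^e$ is strictly convex, hence has a unique minimizer.

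Finally, to get the closed form \Cref{eq:optimal_context}, I would set the gradient of $\phi^e$ to zero: $\nabla_{\xi^e}\phi^e = W^\top\nabla_\theta\Loss(\theta^c,\D^e) + (\bar H^e(\theta^c)+\lambda'W^\top W)\xi^e = 0$, and solve, yielding ${\xi^e}^\star = -(\bar H^e(\theta^c)+\lambda'W^\top W)^{-1}W^\top\nabla_\theta\Loss(\theta^c,\D^e)$. Since the objective decouples across environments, minimizing the sum over $e\in\Ead$ is the same as minimizing each $\phi^e$ separately, which gives the stated solution for all $e\in\Ead$. I would also remark, matching the proposition's phrasing, that because a single Newton step on a quadratic from the initialization $\xi^e=\vzero$ lands exactly at the minimizer — the update being $\xi^e\leftarrow\vzero - (M^e)^{-1}\nabla_{\xi^e}\phi^e|_{\xi^e=\vzero}$ with $\nabla_{\xi^e}\phi^e|_{\vzero}=W^\top\nabla_\theta\Loss(\theta^c,\D^e)$ — this is the claimed ``single preconditioned gradient step projected onto $\W$,'' the projection onto $\W=\Span(W_1,\dots,W_{d_\xi})$ being implemented by the left factor $W^\top$ in the gradient and the map $\xi^e\mapsto W\xi^e$ back to weight space.

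The main obstacle I anticipate is justifying that the quadratic expansion of $\Loss$ is \emph{exact} rather than merely a local approximation; this hinges on $g_\theta$ being affine (linear) in $\theta$ in the relevant sense — if the paper's $g_\theta$ is a general nonlinear network this step only holds approximately near $\theta^c$, and the proof would instead have to either restrict to a linearized regime or treat $\bar H^e(\theta^c)$ as the Gauss--Newton/true Hessian at $\theta^c$ with the statement understood as characterizing the stationary point of the quadratic model. A secondary subtlety is being careful with the factor-of-two bookkeeping between $\lambda$ and $\lambda'=2\lambda$ so that the gradient of $\lambda\|W\xi^e\|_2^2$ contributes exactly $\lambda'W^\top W\xi^e = 2\lambda W^\top W\xi^e$, consistent with \Cref{eq:optimal_context}.
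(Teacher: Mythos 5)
Your proposal is correct and follows essentially the same route as the paper's proof: a second-order Taylor expansion of the regularized loss at $\theta^c$ composed with $\delta\theta^e = W\xi^e$, the determinant-as-a-polynomial-in-$\lambda'$ argument for invertibility outside finitely many values, and the first-order condition yielding \Cref{eq:optimal_context}. If anything you are more careful than the paper on two points it glosses over --- the paper keeps an $o(\|\delta\theta^e\|_2^3)$ remainder yet declares the problem quadratic, and infers convexity from invertibility alone, whereas your exactness-under-affine-$g_\theta$ caveat and positive-semidefiniteness observation are exactly what is needed to close those gaps.
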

\paragraph{Interpretation}
We now interpret \ours by visualizing its loss landscape in \Cref{fig:loss_coda} and comparing it to ERM's loss landscape in \Cref{fig:loss_erm}. 
We use the package in \citet{Li2018b} to plot loss landscapes around $\theta^c$ and consider the \textit{Lotka-Volterra} system, described in \Cref{subsec:systems}.

In \Cref{fig:loss_coda}, loss values of \ours are projected onto subspace $\W$, where $d_\xi=2$.
We make three observations.
First, across environments, the loss is smooth and has a single minimum around $\theta^c$.
Second, the local optimum of the loss is close to $\theta^c$ across environments.
Finally, the minimal loss value on $\W$ around $\theta^c$ is low across environments.
The two first properties were discussed in \Cref{subsec:locality_optim} and are a direct consequence of the locality constraint on $\W$. 
When $\|\cdot\|=\ell_2$, it makes the optimization problem in \Cref{eq:coda-tr} quadratic \wrt $\xi^e$ and convex under invertibility of $\bar{H}^e(\theta^c)+\lambda^\prime W^\top W$ as detailed in \Cref{prop:optimal_context}.
We provided in \Cref{eq:optimal_context} the closed form expression of the solution.
It also imposes small $\|\xi^e\|$ \sut when minimizing the loss in \Cref{eq:coda-tr}, $\theta^c$ remains close to local optimas of all training environments. 
The final observation illustrates that \ours finds a subspace $\W$ with environment-specific parameters of low loss values \ie low-rank adaptation performs well.

In \Cref{fig:loss_erm}, loss values of ERM are projected onto the span of the two principal gradient directions. 
We observe that, unlike \ours, ERM does not find low loss values.
Indeed, it aims at finding $\theta^c$ with good performance across environments, thus cannot model several dynamics. 
\begin{figure}[t]
    \centering
    \subfloat[\ours's loss projected onto $\W=\Span(W_1,W_2)$.\label{fig:loss_coda}]{\includegraphics[width=0.9\linewidth]{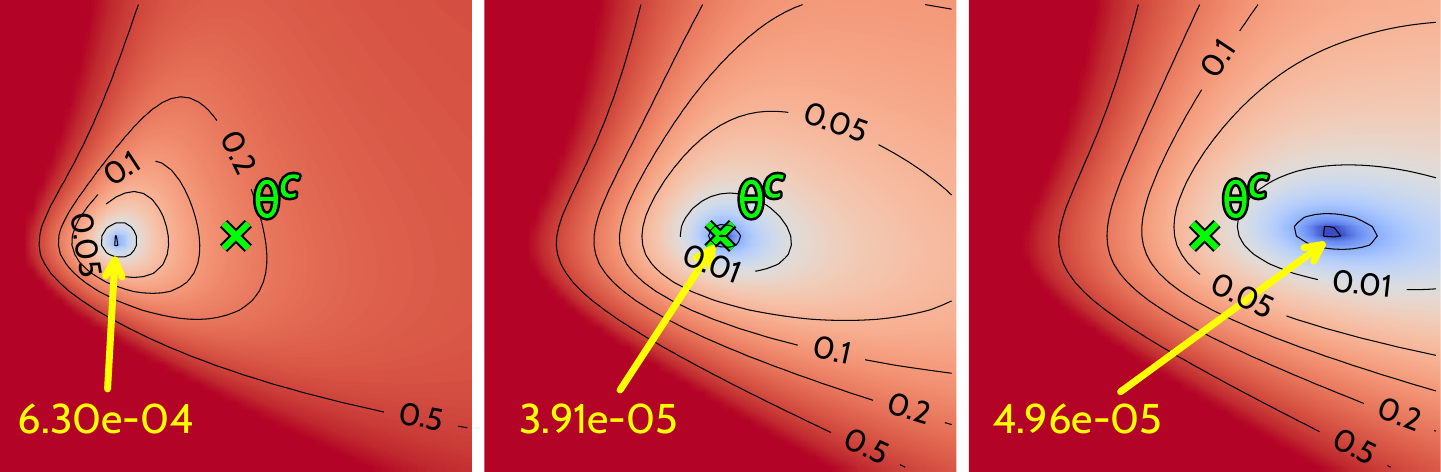}}
    \vspace{0.5em}
    
    \subfloat[ERM's loss projected onto the span of the two principal gradient directions computed with SVD.\label{fig:loss_erm}]{\includegraphics[width=0.9\linewidth]{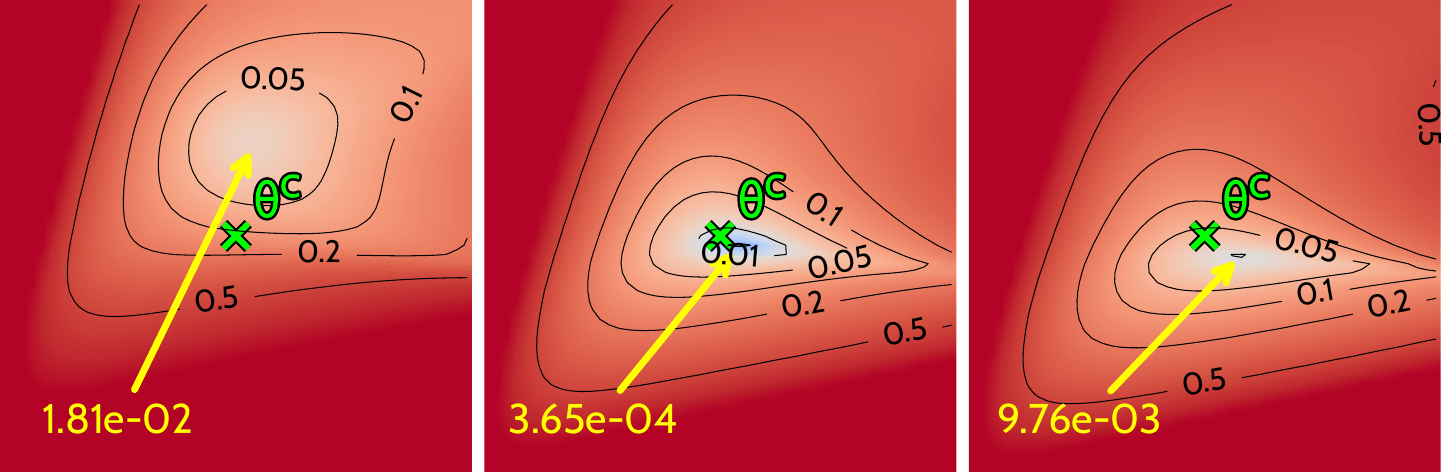}}
    \caption{Loss landscapes centered in $\theta^c$, marked with {\color{ForestGreen}\textbf{\texttimes}}, for 3 environments on the \textit{Lotka-Volterra} ODE. 
    $\forall e$, {\color{goldenrod}\textbf{\textrightarrow}} points to the local optimum ${\theta^e}^\star$ with loss value reported in {\color{goldenrod}yellow}.
    }
    \label{fig:loss_landscape}
    \vskip -0.1in
\end{figure}

\subsection{Validity for Dynamical Systems}
\label{subsec:validity}
We further motivate low-rank decoding in our context-informed hypernetwork approach by providing some evidence that gradients at $\theta^c$ across environments define a low-dimensional subspace.
We consider the loss $\Loss$ in \Cref{eq:Le} and define the gradient subspace in \Cref{def:grad_dir}.
\begin{definition}[Gradient directions]
    With $\Loss$ in \Cref{eq:Le}, $\forall \theta^c \in\mathbb{R}^{d_\theta}$ parametrizing a dynamics model $g_{\theta^c}$, the subspace generated by gradient directions at $\theta^c$ across environments $\E$ is denoted $\G_{\theta^c} \triangleq \Span(\{\nabla_{\theta} \Loss(\theta^c, \D^e)\}_{e \in \E})$. \label{def:grad_dir}
\end{definition}
We show, in \Cref{prop:lr_linearit}, low-dimensionality of $\G_{\theta^c}$ for linearly parametrized systems.
\begin{proposition}[Low-rank under linearity. Proof in \Cref{app:proof}]
    Given a class of linearly parametrized dynamics $\F$ with $d_p$ varying parameters, $\forall \theta^c\!\in\!\mathbb{R}^{d_\theta}\!$, subspace $\G_{\theta^c}\!$ in \Cref{def:grad_dir} is low-dimensional and $\dim(\G_{\theta^c}\!)\!\leq\!d_p\!\ll\!d_{\theta}$.\label{prop:lr_linearit}%
\end{proposition}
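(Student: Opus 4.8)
The plan is to differentiate the matching loss of \Cref{eq:Le} in closed form and read off, from the linear parametrization of $\F$, that all the gradients $\nabla_\theta\Loss(\theta^c,\D^e)$, $e\in\E$, live in one common subspace whose dimension is controlled by $d_p$. By definition of a linearly parametrized class there are fixed vector fields $\phi_1,\dots,\phi_{d_p}$ (and possibly a fixed part $\phi_0$) such that $f^e=\phi_0+\sum_{j=1}^{d_p}p^e_j\phi_j$, the environment index $e$ entering only through the coefficient vector $p^e\in\mathbb{R}^{d_p}$.

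First I would compute the gradient explicitly. Writing $J_{\theta^c}(x)\triangleq\partial_\theta g_{\theta^c}(x)$ for the Jacobian of the dynamics model in its weights and treating the observed states as fixed data, \Cref{eq:Le} gives
\begin{equation*}
 \nabla_\theta\Loss(\theta^c,\D^e)=-2\sum_{i=1}^{N}\int_{t\in I}J_{\theta^c}\!\big(x^{e,i}(t)\big)^{\!\top}\Big(f^e\big(x^{e,i}(t)\big)-g_{\theta^c}\big(x^{e,i}(t)\big)\Big)\diff t.
\end{equation*}
Plugging in the linear expansion of $f^e$ and using linearity of the integral and of the residual--Jacobian product in $f^e$ splits this into
\begin{equation*}
 \nabla_\theta\Loss(\theta^c,\D^e)=a_0+\sum_{j=1}^{d_p}p^e_j\,a_j,
\end{equation*}
with $a_j\triangleq-2\sum_i\int_t J_{\theta^c}(x^{e,i}(t))^{\top}\phi_j(x^{e,i}(t))\diff t$ and $a_0\triangleq-2\sum_i\int_t J_{\theta^c}(x^{e,i}(t))^{\top}(\phi_0(x^{e,i}(t))-g_{\theta^c}(x^{e,i}(t)))\diff t$. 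Since the only remaining dependence on $e$ is through the scalars $p^e_j$, every gradient lies in the affine set $a_0+\Span(a_1,\dots,a_{d_p})$, hence $\G_{\theta^c}=\Span(\{\nabla_\theta\Loss(\theta^c,\D^e)\}_{e\in\E})\subseteq\Span(a_0,a_1,\dots,a_{d_p})$ and $\dim(\G_{\theta^c})\le d_p+1\ll d_\theta$ (the sharper bound $d_p$ is recovered when there is no fixed part and the model contribution $a_0$ is folded in, e.g.\ when $g_{\theta^c}$ itself lies in $\Span(\phi_1,\dots,\phi_{d_p})$, in particular at a well-fit $\theta^c$); either way $\G_{\theta^c}$ is low-dimensional because $d_\theta$ counts neural-network weights while $d_p$ is a handful of physical parameters.

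The step that needs care --- and the main obstacle --- is that the integration points $x^{e,i}(t)$ in \Cref{eq:Le} are the trajectories of environment $e$, so the vectors $a_0,\dots,a_{d_p}$ above a priori still depend on $e$ through the states at which $J_{\theta^c}$ and the $\phi_j$ are evaluated. I would close this gap by working with the vector-field-matching objective actually used (the integral approximation announced as \Cref{eq:Le_int}), in which the residual is assessed on a common pool of observed states / a shared state distribution over $\X$; with that measure fixed across environments the decomposition above holds verbatim with $e$-independent $a_j$. If one insists on genuinely per-environment trajectories, I would instead bound $\dim(\G_{\theta^c})$ by the dimension of the image, under the (environment-dependent but uniformly finite-rank) map $h\mapsto-2\sum_i\int_t J_{\theta^c}^{\top}h$, of the finite-dimensional residual family $\{f^e-g_{\theta^c}\}_e$, which again yields a bound of order $d_p$; I would keep the clean shared-measure version in the body and relegate this refinement to the appendix.
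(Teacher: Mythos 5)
Your proof is correct in substance and follows essentially the same route as the paper's: both observe that $\nabla_\theta\Loss(\theta^c,\D^e)$ is the image of the residual $f^e-g_{\theta^c}$ under a fixed linear map (the adjoint of the parameter-Jacobian of $g_{\theta^c}$), so the gradients span at most as many dimensions as the residuals do, and linear parametrization caps the latter by $d_p$ --- the paper phrases this abstractly via a surjective linear map $\psi:\mathbb{R}^{d_p}\to\F$ and rank--nullity applied to $D_\theta g_{\theta^c}^\top$, whereas you use the explicit basis $\phi_1,\dots,\phi_{d_p}$. Two of your caveats are worth noting. First, the ``main obstacle'' you identify (environment-dependent integration points) is real, and the paper resolves it exactly as you propose: it silently replaces the trajectory-based loss by an integral of $\|f^e-g_\theta\|^2$ against a single measure $\rho_{\X}$ on $\X$ shared across environments. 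Second, your honest bound $d_p+1$ versus the stated $d_p$ mirrors an implicit assumption in the paper's proof, namely that $D_\theta g_{\theta^c}$ maps into $\F$ (so the $g_{\theta^c}$ contribution to the residual is projected into $\F$ and absorbed), which plays the same role as your condition that $a_0$ be folded into $\Span(a_1,\dots,a_{d_p})$; either way the substantive conclusion that $\G_{\theta^c}$ is low-dimensional with dimension $O(d_p)\ll d_\theta$ stands.
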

The linearity assumption is not restrictive as it is present in a wide variety of real-world systems \eg Burger or Korteweg–De Vries PDE \citep{raissi2019}, convection-diffusion \citep{Long2018}, wave and reaction diffusion equations \citep{Yin2021Aphynity} \etc 

Under nonlinearity, we do not have the same theoretical guarantee, yet, we show empirically in \Cref{app:low-rank-nonlinear} that low-dimensionality of parameters of the dynamics model still holds for several systems.
This property is comforted by recent work that highlighted that gradients are low-rank throughout optimization in single-domain settings, \ie that the solution space is low-dimensional \citep{gur-ari2019,Li2018,Li2018b}.
In the same spirit as \ours, this property was leveraged to design efficient solutions to the learning problems \citep{frankle2018, Vogels2019}.

\subsection{Benefits of \ours}
\label{subsec:discussion}
We highlight the benefits of \ours.
\ours is a general time-continuous framework that can be used with any approximator $g_\theta$ of the derivative \Cref{eq:Le}.
It can be trained with a given temporal resolution and tested on another; it handles irregularly-sampled sequences.
The choice of the approximator $g_\theta$ defines the ability to handle different spatial resolutions for PDEs, as further detailed in \Cref{subsec:implem_coda}.

Compared to related adaptation methods, \ours presents several advantages.
First, as detailed in \Cref{app:adaptation_rule}, the adaptation rule in \Cref{eq:adaptation_rule} is similar to the one used in gradient-based meta-learning; yet, our first order joint optimization problem in \Cref{eq:arm_relaxed} simplifies the complex bi-level optimization problem \citep{Antoniou2018}.
Second, \ours introduces the two key properties of locality constraint and low-rank adaptation which guarantee efficient adaptation to new environments as discussed in \Cref{subsec:model}.
Third, it generalizes contextual meta-learning methods \citep{Garnelo2018,ZintgrafSKHW2019}, which also perform low-rank adaptation, via the hypernetwork decoder (details in \Cref{app:conditioning_contextual}). 
Our decoder learns complex environment-conditional dynamics models while controlling their complexity.
Finally, \ours learns context vectors through an inverse problem as \citet{ZintgrafSKHW2019}.
This decoder-only strategy is particularly efficient and flexible in our setting. 
An alternative is to infer them via a learned encoder of $\D^e$ as \citet{Garnelo2018}. 
Yet, the latter was observed to underfit \citep{Kim2018}, requiring extensive tuning of the encoder and decoder architecture. 
Overall, \ours is easy to implement and maintains expressivity with a linear decoder.

\section{Framework Implementation}
\label{sec:implementation}
We detail how to perform trajectory-based learning with our framework and describe two instantiations of the locality constraint. 
We detail the corresponding  pseudo-code.

\paragraph{Trajectory-Based Formulation}
\label{subsec:integral_formulation}
As derivatives in \Cref{eq:Le} are not directly observed, we use in practice for training a trajectory-based formulation of \Cref{eq:Le}.
We consider a set of $N$ trajectories, $\D^e$.
Each trajectory is discretized over a uniform temporal and spatial grid and includes $\frac{T}{\Delta t} \left(\frac{S}{\Delta s}\right)^{d_s}$ states, where $d_s$ is the spatial dimension for PDEs and $d_s=0$ for ODEs.
$\Delta t, \Delta s$ are the temporal and spatial resolutions and $T, S$ the temporal horizon and spatial grid size.

Our loss writes as:
\begingroup
\setlength\abovedisplayskip{3.5pt}
\setlength\belowdisplayskip{3.5pt}
\begin{align}
    &\Loss(\theta, \D^e) = \sum_{i=1}^N \sum_{j=1}^{(\nicefrac{S}{\Delta s})^{d_s}}\sum_{k=1}^{\nicefrac{T}{\Delta t}}\left\|x^{e,i}(t_k, s_j)-\tilde{x}^{e,i}(t_k, s_j)\right\|^{2}_2 \notag\\[-4pt]
   &{\text{where~}\tilde{x}^{e,i}(t_k) = x^{e,i}_0 + \int_{t_{0}}^{t_{k}} g_\theta\left(\tilde{x}^{e,i}(\tau)\right) \diff \tau}\label{eq:Le_int}
\end{align}
\endgroup
$x^{e,i}(t_k, s_j)$ is the state value in the $i^{th}$ trajectory from environment $e$ at the $j^{th}$ spatial coordinate $s_j$ and time $t_k\triangleq k \Delta t$. 
$x^{e,i}(t) \triangleq [x(t, s_1), \cdots, x(t, s_{\left(\nicefrac{S}{\Delta s}\right)^{d_s}})]^\top$ is the state vector in the $i^{th}$ trajectory from environment $e$ over the spatial domain at time $t$ and $x^{e,i}_0$ is the corresponding initial condition.
To compute $\tilde{x}^{e,i}(t_k)$, we apply for integration a numerical solver \citep{Hairer2000} as detailed later.

\paragraph{Locality Constraint}
\label{subsec:locality_constraint}
Instead of penalizing $\lambda \|W\xi^e\|^2$ in \Cref{eq:coda-tr}, we found it more efficient to penalize separately $W$ and $\xi^e$. 
We thus introduce the following regularization:
\begin{equation}
    R(W, \xi^e) \triangleq \lambda_\xi \|\xi^e\|_2^2 + \lambda_\Omega \Omega(W) \label{eq:reg}
\end{equation}
It involves hyperparameters $\lambda_\xi, \lambda_\Omega$ and a norm $\Omega(W)$ which depends on the choice of $\|\cdot\|$ in \Cref{eq:arm_relaxed}.
Minimizing $R(W,\xi^e)$ minimizes an upper-bound to $\|\cdot\|$, derived in \cref{app:locality_upper_bound} for the two considered variations of $\|\cdot\|$:
\begin{itemize}
    \item \oursltwo sets $\|\cdot\|{}\triangleq \ell_2(\cdot)$ and $\Omega{}\triangleq \ell_2^2$, constraining $W \xi^e$ to a sphere.
    \item \ourslone sets $\|\cdot\|{}\triangleq \ell_{1}(\cdot)$ and $\Omega=\ell_{1,2}$ over rows \ie $\Omega(W){}\triangleq \sum_{i=1}^{d_\theta} \|W_{i,:}\|_2$ to induce sparsity and find most important parameters for adaptation.
$\smash{\ell_{1,2}}$ constrains $\W$ to be axis-aligned; then the number of solutions is finite as $\dim(\W)$ is finite.
\end{itemize}

\paragraph{Pseudo-Code}
\label{subsec:pseudo-code}
We solve \Cref{eq:coda-tr} for training and \Cref{eq:coda-ts} for adaptation using \cref{eq:Le_int,eq:reg} and \Cref{alg:training_adaptation}. 
We back-propagate through the solver with torchdiffeq \citep{Chen_torchdiffeq_2021} and apply exponential Scheduled Sampling \citep{Goyal2016} to stabilize training. 
We provide our code at \url{https://github.com/yuan-yin/CoDA}.

\begin{figure}
\vskip -0.1in
\begin{algorithm}[H]
    \caption{\ours Pseudo-code} \label{alg:training_adaptation}
    \begin{algorithmic}
        \STATE \hspace*{-1.25em} \textit{\underline{Training}:}
          \REQUIRE $\Etr\subset\E$, $\{\D^{e_{\text{tr}}}\}_{e_{\text{tr}}\in\Etr}$ with $\forall e_{\text{tr}}\in\Etr, \#\D^{e_{\text{tr}}}=N_{\text{tr}}$;
          \STATE $\pi=\{W, \theta^c, \{\xi^{e_{\text{tr}}}\}_{e_{\text{tr}} \in \Etr}\}$ where $W\in\mathbb{R}^{d_\theta\times d_\xi}, \theta^c\in\mathbb{R}^{d_\theta}$ randomly initialized and $\forall e_{\text{tr}}\in\Etr, \xi^{e_{\text{tr}}}=\vzero\in\mathbb{R}^{d_\xi}$.
          \LOOP
            \STATE $\smash{\pi \leftarrow \pi - \eta \nabla_{\pi}\!\Big(\!\sum\limits_{e_{\text{tr}}\in\Etr}\!\Loss(\theta^c + W \xi^{e_{\text{tr}}}, \D^{e_{\text{tr}}})\!+\!R(W, \xi^{e_{\text{tr}}})\!\Big)}$ 
          \ENDLOOP
          \STATE \hspace*{-1.25em} \textit{\underline{Adaptation}:}
          \REQUIRE $e_{\text{ad}}\in\Ead$; $\D^{e_{\text{ad}}}$ with $\#\D^{e_{\text{ad}}}=N_{\text{ad}}$;
          \STATE Trained $W\in\mathbb{R}^{d_\theta\times d_\xi}, \theta^c\in\mathbb{R}^{d_\theta}$ and $\xi^{e_{\text{ad}}}=\vzero\in\mathbb{R}^{d_\xi}$.
          \LOOP 
          \STATE $\smash[t]{\xi^{e_{\text{ad}}}\!\leftarrow\!\xi^{e_{\text{ad}}}\!- \eta \nabla_{\!\xi^{e_{\text{ad}}}\!}\Big(\!\Loss(\theta^c + W \xi^{e_{\text{ad}}}, \D^{e_{\text{ad}}})+R(W, \xi^{e_{\text{ad}}})\!\Big)}$
          \ENDLOOP
    \end{algorithmic}
\end{algorithm}
\vskip -0.3in
\end{figure}

\begin{table*}[t]
    \centering
    \vspace{-0.12in}
    \caption{Test MSE ($\downarrow$) in training environments $\Etr$ (\textit{In-Domain}), new environments $\Ead$ (\textit{Adaptation}). Best in \textbf{bold}; second \underline{underlined}.} \label{tab:results}
    \scshape 
    \vskip 0.1in
    \resizebox{\linewidth}{!}{
    \begin{tabular}{@{}lllcllcllcll@{}}
         \toprule
          & \multicolumn{2}{c}{\texttt{LV} ($\times 10^{-5}$)}  && \multicolumn{2}{c}{\texttt{GO} ($\times 10^{-4}$)} && \multicolumn{2}{c}{\texttt{GS}  ($\times 10^{-3}$)} && \multicolumn{2}{c}{\texttt{NS} ($\times 10^{-4}$)} \\
         \cmidrule(lr){2-3} \cmidrule(lr){5-6} \cmidrule(lr){8-9} \cmidrule(lr){11-12} & In-domain & Adaptation && In-domain & Adaptation && In-domain & Adaptation && In-domain & Adaptation \\
         \midrule
         MAML & 60.3±1.3 & 3150±940 && 57.3±2.1 & 1081±62 && 3.67±0.53 & 2.25±0.39 && 68.0±8.0 & 51.1±4.0 \\
         ANIL & 381±76 & 4570±2390 && 74.5±11.5 & 1688±226 &&   5.01±0.80 & 3.95±0.11 && 61.7±4.3 & 48.6±3.2 \\
         Meta-SGD & 32.7±12.6 & 7220±4580 && 42.3±6.9 & 1573±413 && 2.85±0.54 & 2.68±0.20 && 53.9±28.1 & 44.3±27.1\\
         LEADS & 3.70±0.27 & 47.61±12.47 && 31.4±3.3 & 113.8±41.5 && 2.90±0.76 & 1.36±0.43 && 14.0±1.55 & 28.6±7.23 \\ 
         CAVIA-FiLM & 4.38±1.15 & 8.41±3.20 && 4.44±1.46 & 3.87±1.28 && 2.81±1.15 & 1.43±1.07 && 23.2±12.1 & 22.6±9.88 \\
         CAVIA-Concat & 2.43±0.66 & 6.26±0.77 && 5.09±0.35 & 2.37±0.23 && 2.67±0.48 & 1.62±0.85 && 25.5±6.31 & 26.0±8.24\\
         \ours-$\ell_2$ & \underline{1.52}±0.08 & \underline{1.82}±0.24 && \underline{2.45}±0.38 & \underline{1.98}±0.06 && \underline{1.01}±0.15 & \underline{0.77}±0.10 && \underline{9.40}±1.13 & \underline{10.3}±1.48 \\ 
         \ours-$\ell_1$ & \textbf{1.35}±0.22 & \textbf{1.24}±0.20 && \textbf{2.20}±0.26 & \textbf{1.86}±0.29 && \textbf{0.90}±0.057 & \textbf{0.74}±0.10 && \textbf{8.35}±1.71 & \textbf{9.65}±1.37 \\ 
         \bottomrule
    \end{tabular}}
    \vskip -0.15in
\end{table*}
\section{Experiments}
\label{sec:experiments}
We validate our approach on four classes of challenging nonlinear temporal and spatiotemporal physical dynamics, representative of various fields \eg chemistry, biology and fluid dynamics.
We evaluate in-domain and adaptation prediction performance and compare them to related baselines.
We also investigate how learned context vectors can be used for system parameter estimation. 
We consider a few-shot adaptation setting where only few trajectories ($N_{\text{ad}}$) are available at adaptation time on new environments.

\subsection{Dynamical Systems}
\label{subsec:systems}
We consider four ODEs and PDEs described in \Cref{app:dynsys}.
ODEs include \textit{Lotka-Volterra} (\texttt{LV}, \citealp{Lotka1925}) and \textit{Glycolitic-Oscillator} (\texttt{GO}, \citealp{Daniels2015}), modelling respectively predator-prey interactions and the dynamics of yeast glycolysis.
PDEs are defined over a 2D spatial domain and include  \textit{Gray-Scott} (\texttt{GS}, \citealp{Pearson1993}), a reaction-diffusion system with complex spatiotemporal patterns and the challenging \textit{Navier-Stokes} system (\texttt{NS}, \citealp{stokes1851effect}) for incompressible flows.
All systems are nonlinear \wrt system states and all but \texttt{GO} are linearly parametrized.
The analysis in \Cref{subsec:validity} covers all systems but \texttt{GO}. 
Experiments on the latter show that \ours also extends to nonlinearly parametrized systems.

\subsection{Experimental Setting}
We consider forecasting: only the initial condition is used for prediction.
We perform two types of evaluation: in-domain generalization on $\Etr$ \textit{(In-domain)} and out-of-domain adaptation to new environments $\Ead$ (\textit{Adaptation}).
Each environment $e\in\E$ is defined by system parameters and $p^e \in \mathbb{R}^{d_p}$ denotes those that vary across $\E$.
$d_p$ represents the degrees of variations in $\F$; $d_p=2$ for \texttt{LV}, \texttt{GO}, \texttt{GS} and $d_p=1$ for \texttt{NS}.
\Cref{app:dynsys} defines for each system the number of training and adaptation environments ($\#\Etr$ and $\#\Ead$) and the corresponding parameters.
\Cref{app:dynsys} also reports the number of trajectories $N_{\text{tr}}$ per training environment in $\Etr$ and the distribution $p(X_0)$ from which are sampled all initial conditions (including adaptation and evaluation initial conditions).
For \textit{Adaptation}, we consider $N_{\text{ad}}=1$ trajectory per new environment in $\Ead$ to infer the context vector with \Cref{eq:coda-ts}.
We consider more trajectories per adaptation environment in \Cref{subsec:sample_efficiency}.

Evaluation is performed on 32 new test trajectories per environment.
We report, in our tables, mean and standard deviation of Mean Squared Error (MSE) across test trajectories (\Cref{eq:Le_int}) over four different seeds.
We report, in our figures, Mean Absolute Percentage Error (MAPE) in $\%$ over trajectories, as it allows to better compare performance across environments and systems.
We define MAPE$(z,y)$ between a $d$-dimensional input $z$ and target $y$ as $\frac{1}{d}\sum_{j=1\ldots d: y_j\neq 0}\frac{|z_j-y_j|}{|y_j|}$.
Over a trajectory, it extends into $\int_{t\in I}\text{MAPE}(\tilde{x}(t), x(t))\diff t$, with $\tilde{x}$ defined in \Cref{eq:Le_int}.

\subsection{Implementation of \ours}
\label{subsec:implem_coda}
We used for $g_\theta$ MLPs for ODEs, a resolution-dependent ConvNet for \texttt{GS} and a resolution-agnostic FNO \cite{Li2021} for \texttt{NS} that can be used on new resolutions.
Architecture details are provided in \Cref{app:implem_hyperparam}.
We tuned $d_\xi$ and observed that $d_\xi=d_p$, the number of system parameters that vary across environments, performed best (\cf \Cref{subsec:ablation_study}).
We use Adam optimizer \cite{Kingma2015} for all datasets; RK4 solver for \texttt{LV}, \texttt{GS}, \texttt{GO} and Euler solver for \texttt{NS}. 
Optimization and regularization hyperparameters are detailed in \Cref{app:implem_hyperparam}.

\subsection{Baselines}
\label{subsec:baseline_archi}
We consider three families of baselines, compared in Appendix \Cref{fig:baselines} and detailed in \Cref{sec:related_work}.
First, Gradient-Based Meta-Learning (GBML) methods MAML \citep{Finn2017}, ANIL \citep{Rusu2018} and Meta-SGD \citep{Li2017}.
Second, the Multi-Task Learning method LEADS \citep{Yin2021LEADS}.
Finally, the contextual meta-learning method CAVIA \citep{ZintgrafSKHW2019}, with conditioning via concatenation (Concat) or linear modulation of final hidden features (FiLM, \citealp{Perez2018}).
All baselines are adapted to be dynamics-aware \ie time-continuous: they consider the loss in \Cref{eq:Le_int}, as \ours.
Moreover, they share the same architecture for $g_\theta$ as \ours.

\subsection{Generalization Results}
\label{sec:generalization_results}
\begin{figure}[t]
    \centering
    \vskip 0.1in
    \includegraphics[width=0.95\linewidth]{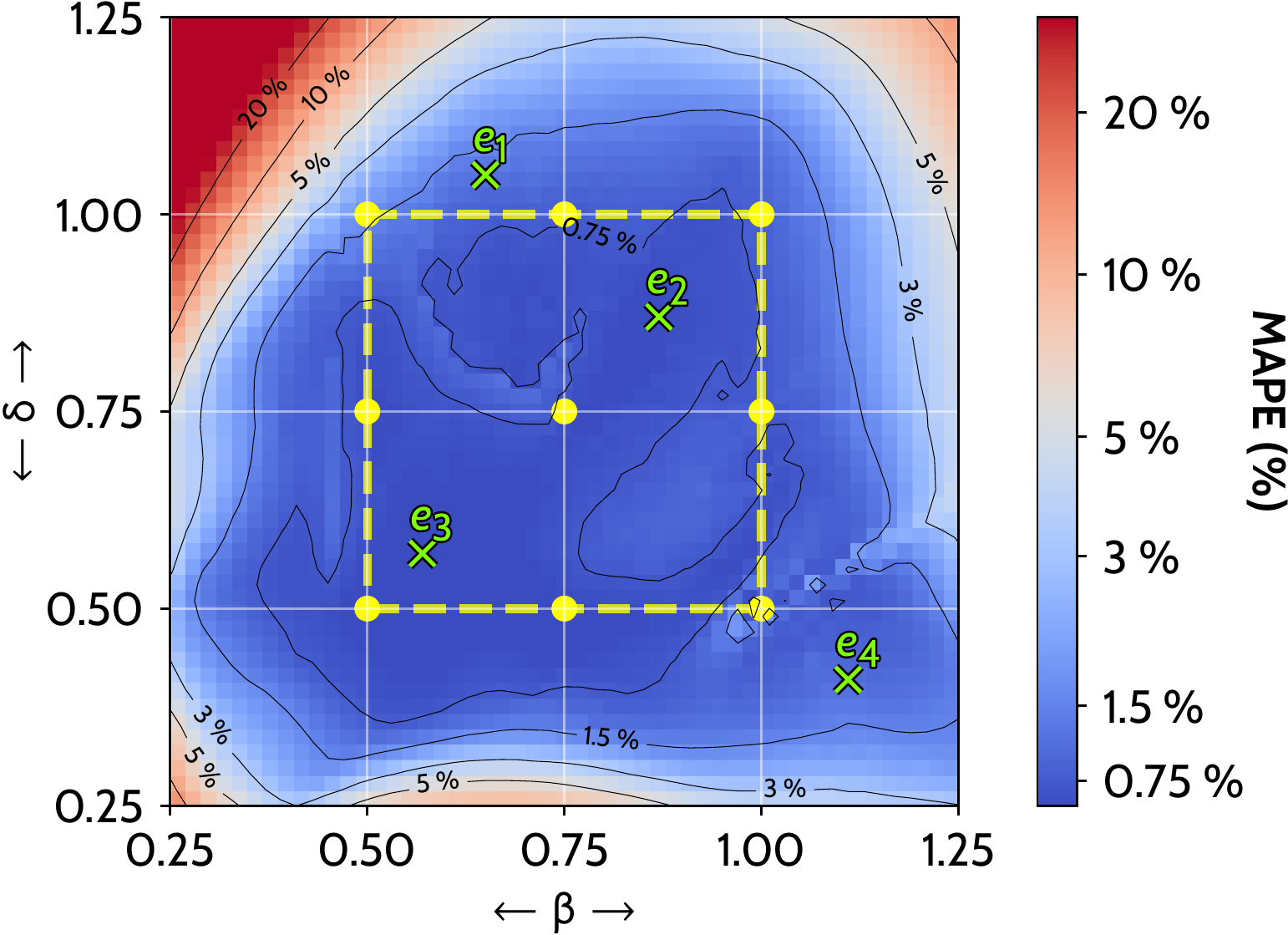}
    \vskip 0.1in
    \includegraphics[width=0.95\linewidth]{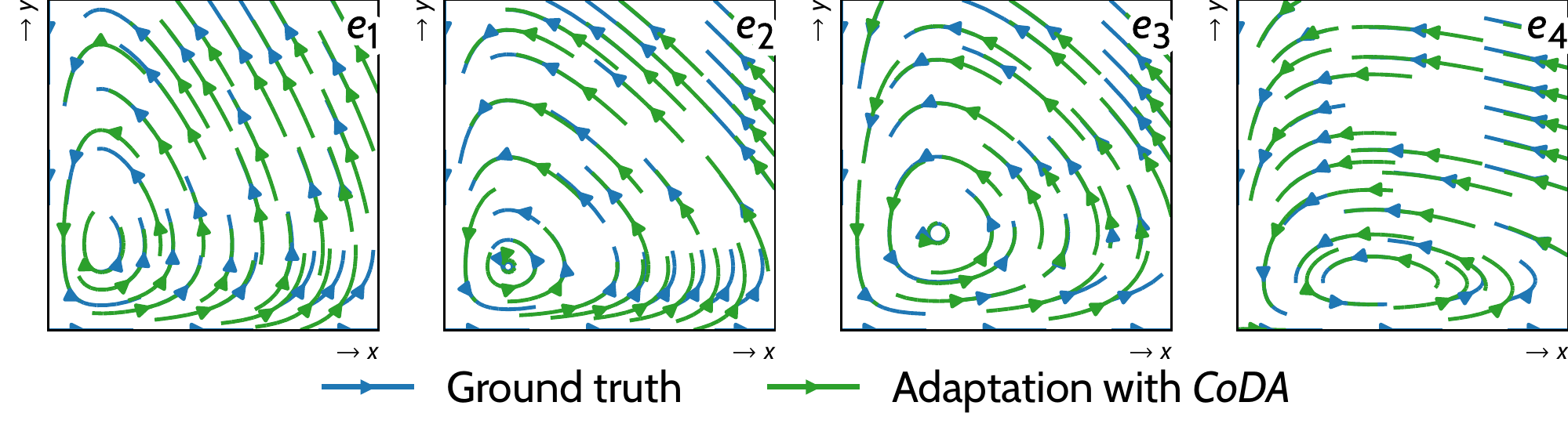}
    \vskip -0.05in
    \caption{\textit{Adaptation} results with \ourslone on \texttt{LV}. Parameters $(\beta, \delta)$ are sampled in $\smash{[0.25, 1.25]^2}$ on a $51\times 51$ uniform grid, leading to 2601 adaptation environments $\Ead$. {\color{goldenrod}$\bullet$} are training environments $\Etr$. We report MAPE ($\downarrow$) across $\Ead$ (top). On the bottom, we choose four of them (${\color{ForestGreen}\textbf{\texttimes}}$, $e_1$--$e_4$), to show the ground-truth ({\color{blue}blue}) and predicted ({\color{ForestGreen}green}) phase space portraits. $x, y$ are respectively the quantity of prey and predator in the system in \Cref{eq:lv}.} \label{fig:adaptation_context}
\end{figure}
\begin{figure}[t]\label{fig:ablation_dim}
    \centering
    \includegraphics[width=\linewidth]{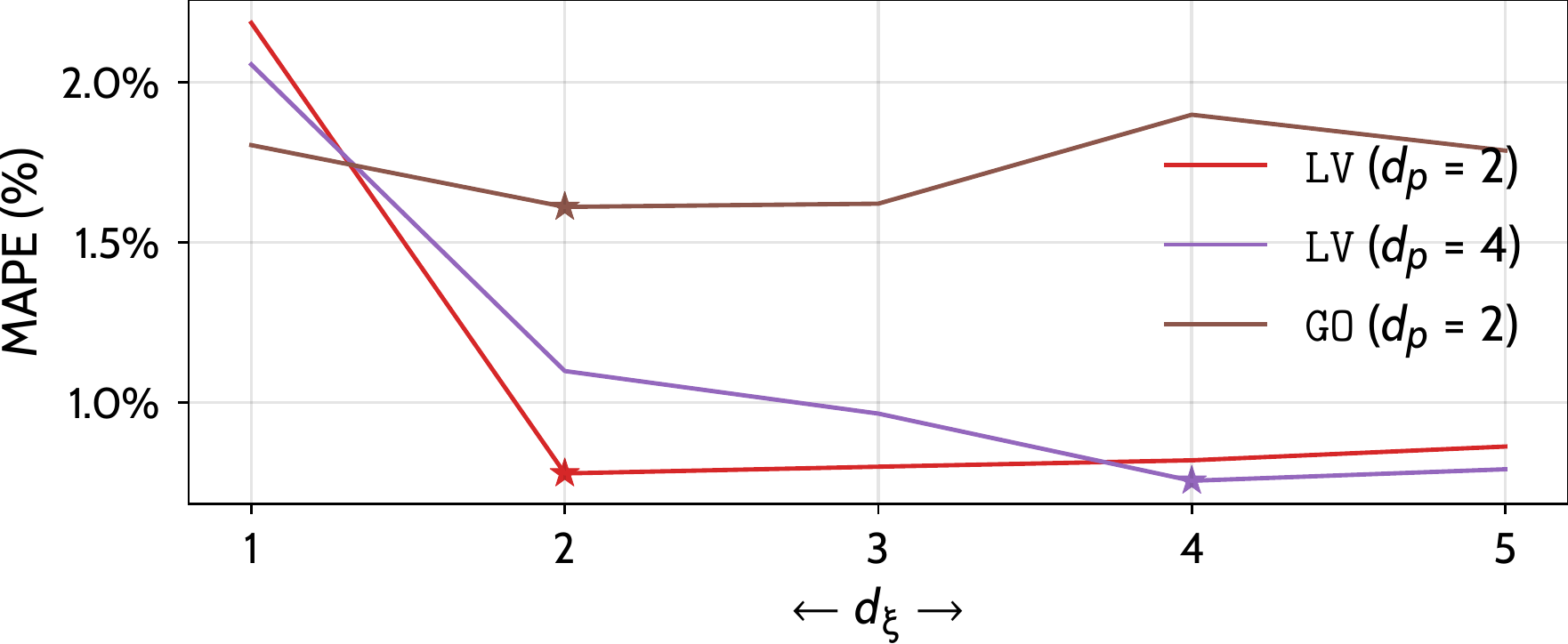}%
    \vskip -0.1in
    \caption{Dimension of the context vectors ($d_\xi$) and test \textit{In-Domain} MAPE ($\downarrow$) with \ours-$\ell_1$. ``$\star$'' is the smallest MAPE.} 
    \vskip -0.15in
\end{figure}
\begin{table}[t!]
    \centering
    \vskip -0.1in
    \caption{Locality and \textit{In-Domain} test MSE ($\downarrow$). Best in \textbf{bold}.} \label{table:ablation_locality}
    \vskip 0.1in
    \scshape
    \resizebox{\linewidth}{!}{
    \begin{tabular}{llllll}
         \toprule
         & \multicolumn{2}{c}{\texttt{LV} ($\times 10^{-5}$)} && \multicolumn{2}{c}{\texttt{GO} ($\times 10^{-4}$)} \\
         \cmidrule(lr){2-3} \cmidrule(lr){5-6} \ours & W/o $\ell_2$ & With $\ell_2$ && W/o $\ell_2$ & With $\ell_2$ \\
         \midrule
         Full & 2.28±0.29 & 1.52±0.08 && 2.98±0.71 & 2.45±0.38 \\
         FirstLayer & 2.25±0.29 & 2.41±0.23 && 2.38±0.71 & \textbf{2.12}±0.55 \\ 
         LastLayer & 1.86±0.24 & \textbf{1.27}±0.03 && 28.4±0.60 & 28.4±0.64 \\
         \bottomrule
    \end{tabular}}
    \vskip -0.15in
\end{table}
\begin{figure*}[t!]
    \vskip -0.1in
    \centering
    \subfloat[{\texttt{LV}: $(\beta, \delta) \in [0.25, 1.25]^2$}\label{fig:context_param}]{\includegraphics[height=1.55in,trim={0.3in 0 0 0}]{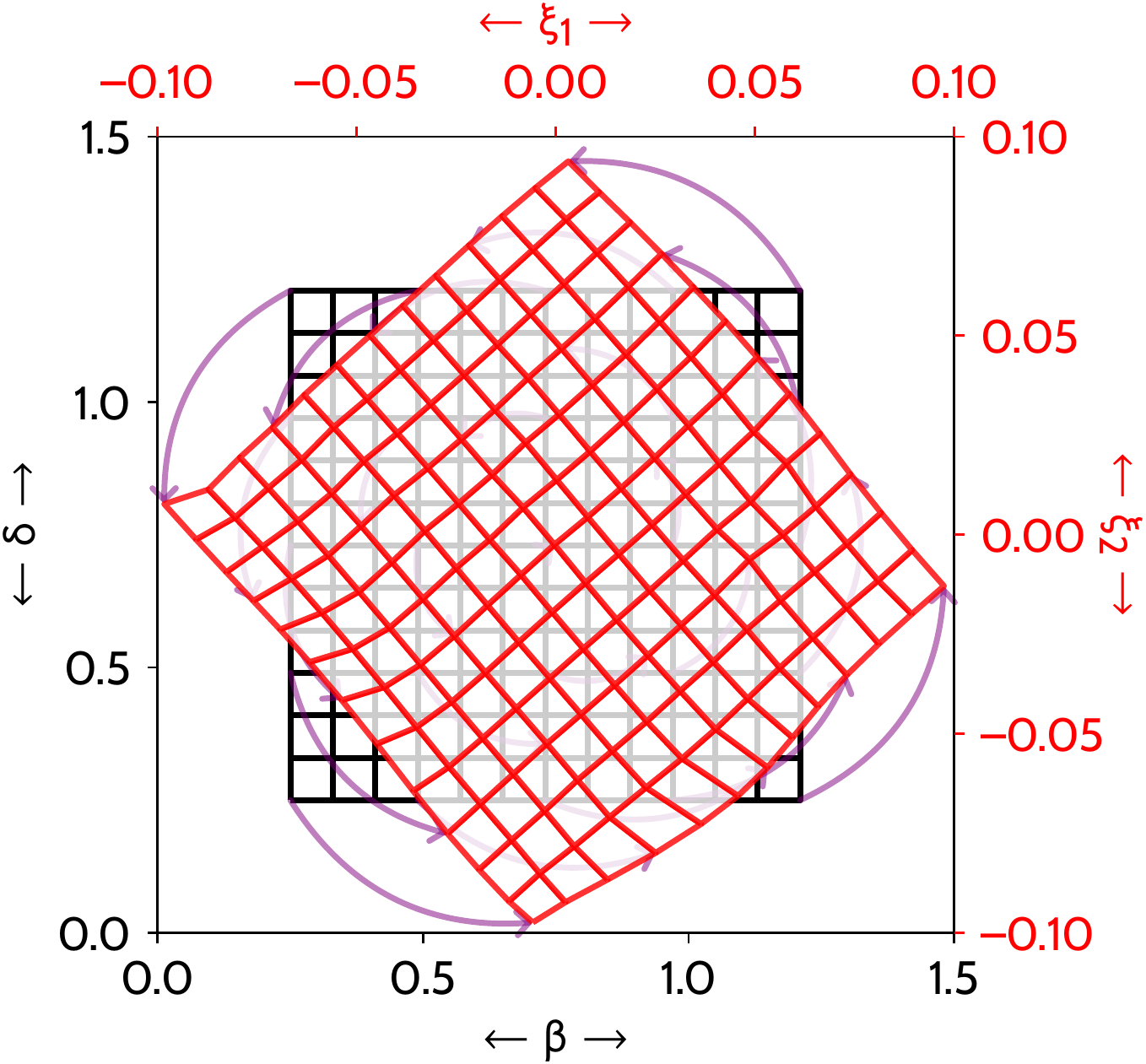}~
    \includegraphics[height=1.35in]{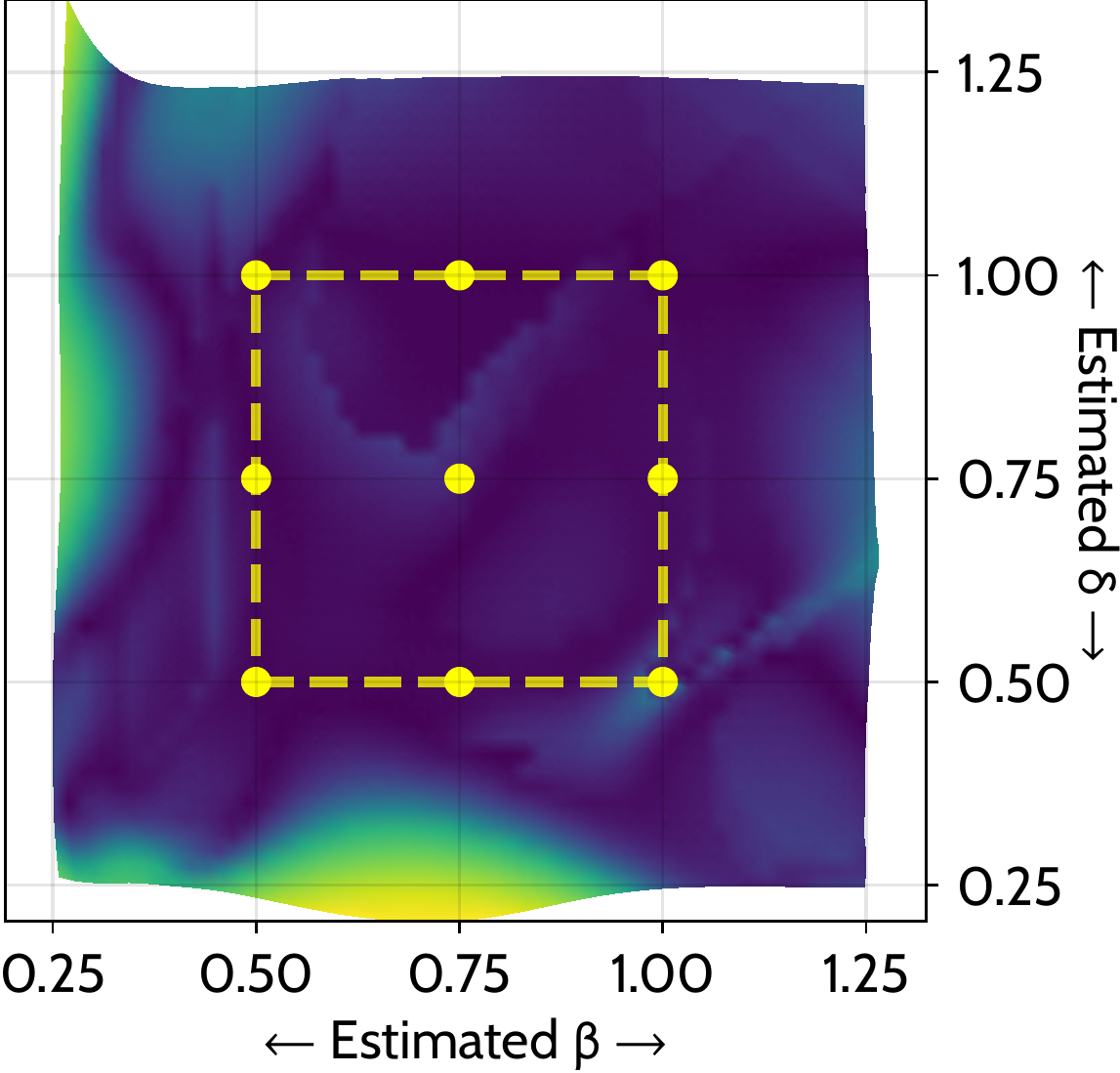} \vspace{-0.05in}}
    \subfloat[{\texttt{GS}: $F\in [2.25,4.35]\cdot 10^{-2}$, $k\in[5.6,6.4]\cdot10^{-2}$}\label{app:param_gs}]{\includegraphics[height=1.4in,trim={0.1in 0.1in 0.1in 0}, clip]{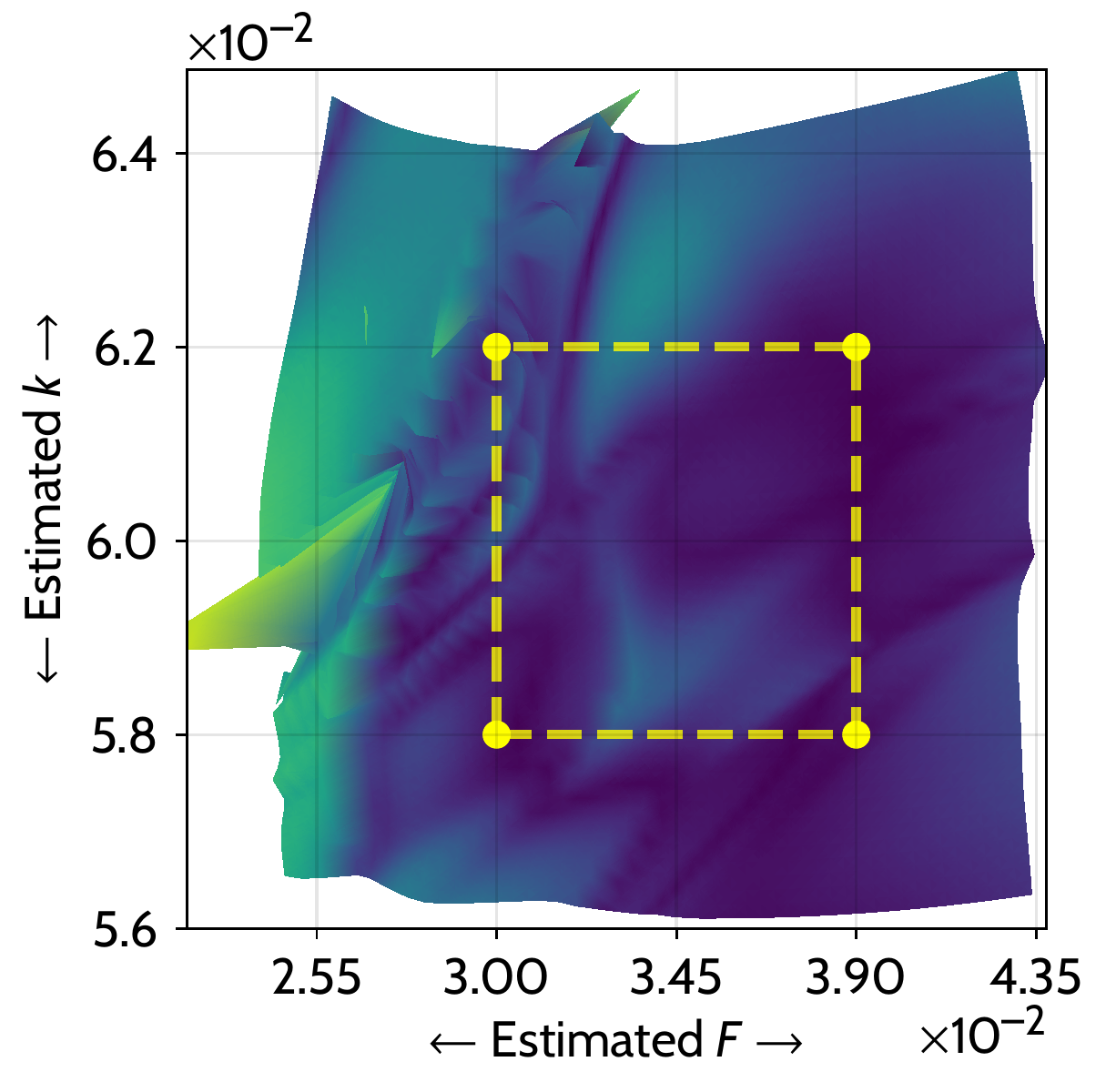}}
    \subfloat[{\texttt{NS}: $\nu \in  [0.6, 1.4]\cdot 10^{-3}$}\label{fig:param_ns}]{\includegraphics[align=t,width=0.35\linewidth]{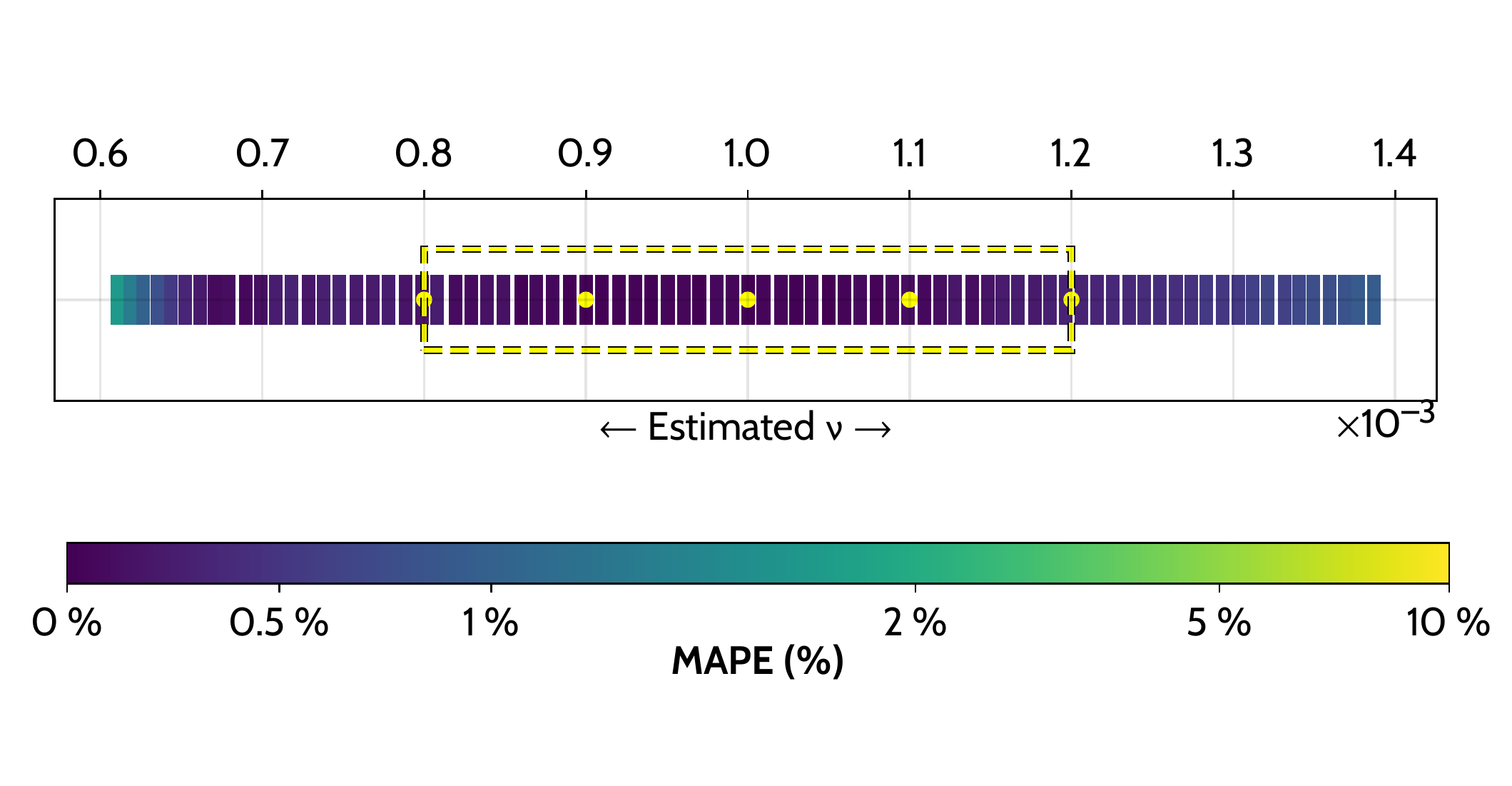}}
    \vskip -0.05in
    \caption{Parameter estimation with \ourslone in new adaptation environments on (a) \texttt{LV}, (b) \texttt{GS} and (c) \texttt{NS}. In (a), we visualize: on the left, context vectors $\xi$ ({\color{red}red}); on the right, true parameters $(\beta, \delta)$ (black). In (b) and (c), we visualize estimated parameters with corresponding estimation MAPE ($\downarrow$). {\color{goldenrod}$\bullet$} are training environments $\Etr$ with known parameters. {\color{goldenrod}-~-} delimits the convex hull of $\Etr$.} \label{fig:parameter_estimation}
    \vskip -0.15in
\end{figure*}
In \Cref{tab:results}, we observe that \ours improves significantly test MSE \wrt our baselines for both \textit{In-Domain} and \textit{Adaptation} settings.
For PDE systems and a given test trajectory, we visualize in \Cref{fig:gs_traj,fig:ns_traj} in \Cref{app:visualization} the predicted MSE by these models along the ground truth.
We also notice improvements for \ours over our baselines.
Across datasets, all baselines are subject to a drop in performance between \textit{In-Domain} and \textit{Adaptation} while \ours maintains remarkably the same level of performance in both cases.
In more details, GBML methods (MAML, ANIL, Meta-SGD) overfit on training \textit{In-Domain} data especially when data is scarce.
This is the case for ODEs which include less system states for training than PDEs. 
LEADS performs better than GBML but overfits for \textit{Adaptation} as it does not adapt efficiently.
CAVIA-Concat/FiLM perform better than GBML and LEADS, as they leverage a context, but are less expressive than \ours.
Both variations of \ours perform best as they combine the benefits of low-rank adaptation and locality constraint.
\ourslone is better than \oursltwo as it induces sparsity, further constraining the hypothesis space. 

We evaluate in \Cref{fig:adaptation_context} \ourslone on \texttt{LV} for \textit{Adaptation} over a wider range of adaptation environments ($\#\Ead=51\times 51=2601$). 
We report mean MAPE over $\Ead$ (top). 
We observe three regimes: inside the convex hull of training environments $\Etr$, MAPE is very low; outside the convex-hull, MAPE remains low in a neighborhood of $\Etr$; beyond this neighborhood, MAPE increases.
\ours thus generalizes efficiently in the neighborhood of training environments and degrades outside this neighborhood.
We plot reconstructed phase space portraits (bottom) on four selected environments and observe that the learned solution ({\color{ForestGreen}green}) closely follows the target trajectories ({\color{blue}blue}).

\subsection{Ablation Studies}
\label{subsec:ablation_study}
We perform two studies on \texttt{LV} and \texttt{GO}.
In a first study in \Cref{table:ablation_locality}, we evaluate the gains due to using $\ell_2$ locality constraint on \textit{In-Domain} evaluation.
On line 1 (Full), we observe that \oursltwo performs better than \ours without locality constraint.
Prior work perform adaptation only on the final layer with some performance improvements on classification or Hamiltonian system modelling \citep{Raghu2020,Chen2020}. 
In order to evaluate this strategy, we manually restrict hypernetwork-decoding to only one layer in the dynamics model $g_\theta$, either the first layer (line 2) or the last layer (line 3). 
We observe that the importance of the layer depends on the parametrization of the system: for \texttt{LV}, linearly parametrized, the last layer is better while for \texttt{GO}, nonlinearly parametrized, the first layer is better. 
\ourslone generalizes this idea by automatically selecting the useful adaptation subspace via $\ell_{1,2}$ regularization, offering a more flexible approach to induce sparsity. 

In a second study in \Cref{fig:ablation_dim}, we analyze the impact on MAPE of the dimension of context vectors $d_\xi$ for \ourslone.
We recall that $d_\xi$ upper-bounds the dimension of the adaptation subspace $\W$ and was cross-validated in \Cref{tab:results}.
In the following, $d_p$ is the number of parameters that vary across environments.
We illustrate the effect of the cross-validation on MAPE for $d_p=2$ on \texttt{LV} and \texttt{GO} as in \Cref{sec:generalization_results} and additionally for $d_p=4$ on \texttt{LV}.
We observe in \Cref{fig:ablation_dim} that the minimum of MAPE is reached for $d_\xi=d_p$ with two regimes: when $d_\xi<d_p$, performance decreases as some system dimensions cannot be learned; when $d_\xi > d_p$, performance degrades slightly as unnecessary directions of variations are added, increasing the hypothesis search space.
This study shows the validity of the low-rank assumption and illustrates how the unknown $d_p$ can be recovered through cross-validation.

\subsection{Sample Efficiency}
\label{subsec:sample_efficiency}
\begin{table}[t]
    \centering
    \scshape
    \caption{Test MSE $\times 10^{-5}$ ($\downarrow$) in new environments $\Ead$ (\textit{Adaptation}) on \textit{Lotka-Volterra}. Best for each setting in \textbf{bold}.}
    \vskip 0.1in
    \resizebox{0.8\linewidth}{!}{
    \begin{tabular}{lllll}
         \toprule
         & \multicolumn{3}{c}{Number of adaptation trajectories {\normalshape$N_{\text{ad}}$}} \\
         \cmidrule(lr){2-4}  & 1 & 5 & 10 \\
         \midrule
         MAML & 3150±940 & 239±16 & 173±10\\
         LEADS & 47.61±12.47 & 19.89±7.23 & 19.42±3.52\\
         \ours-$\ell_1$ & \textbf{1.24}±0.20 & \textbf{1.21}±0.18 & \textbf{1.20}±0.17\\
         \bottomrule
    \end{tabular}} \label{tab:sample_efficiency}
    \vskip -0.15in
\end{table}
\begin{table}
\vskip -0.05in
    \centering
    \scshape
    \caption{Parameter estimation MAPE ($\downarrow$) for \ours-$\ell_1$ on \texttt{LV} ($\#\Etr=9$), \texttt{GS} ($\#\Etr=4$) and \texttt{NS} ($\#\Etr=5$).} \label{tab:param_error_all}
    \vskip 0.1in
    \resizebox{\linewidth}{!}{
    \begin{tabular}{llllllll}
    \toprule
      & \multicolumn{2}{c}{In-convex-hull} && \multicolumn{2}{c}{Out-of-convex-hull} && Overall \\
     \cmidrule(lr){2-3} \cmidrule(lr){5-6} \cmidrule(lr){8-8} & MAPE (\%) & {\normalshape$\#\Ead$} && MAPE (\%) & {\normalshape$\#\Ead$} && MAPE (\%) \\
     \midrule
      \texttt{LV} & 0.15±0.11 & 625 && 0.73±1.33 & 1976 && 0.59±1.33 \\
      \texttt{GS} & 0.37±0.25 & 625 && 0.74±0.67 & 1976 && 0.65±0.62 \\
      \texttt{NS} & 0.10±0.08 & 40 && 0.51±0.35 & 41 && 0.30±0.33 \\
    \bottomrule
    \end{tabular}}
    \vskip -0.15in
\end{table}
We handled originally one-shot adaptation ($N_{\text{ad}}=1$), the most challenging setting.
We vary the number of adaptation trajectories $N_{\text{ad}}$ on \textit{Lotka-Volterra} in \Cref{tab:sample_efficiency}.
With more trajectories, performance improves significantly for MAML; moderately for LEADS; while it remains flat for \ours.
This highlights \ours's sample-efficiency and meta-overfitting for GBML \citep{Mishra2018}.

\subsection{Parameter Estimation}
\label{sec:qualitative_results}
We use \ours to perform parameter estimation, leveraging the links between learned context and system parameters. 
\subsubsection{Empirical observations}
In \Cref{fig:context_param} (left), we visualize on \texttt{LV} the learned context vectors $\xi^e$ ({\color{red} red}) and the system parameters $p^e$ (black), $\forall e\in\Etr\cup\Ead$.
We observe empirically a linear bijection between these two sets of vectors.
Such a correspondence being learned on the training environments, we can use the correspondence to verify if it still applies to new adaptation environments.
Said otherwise, we can check if our model is able to infer the true parameters for new environments.

We evaluate in \Cref{tab:param_error_all} the parameter estimation MAPE over \texttt{LV}, \texttt{GS} and \texttt{NS}.
\Cref{fig:parameter_estimation} displays estimated parameters along estimation MAPE.
Experimentally, we observe low MAPE inside and even outside the convex-hull of training environments. 
Thus, \ours identifies accurately the unknown system parameters with little supervision.

\subsubsection{Theoretical motivation}
We justify these empirical observations theoretically in \Cref{prop:context_bijectivity}  under the following conditions:
\begin{assumption} \label{ass:linear_dyn}
    The dynamics in $\F$ are linear \wrt inputs and system parameters.
\end{assumption}
\begin{assumption} \label{ass:linear_model}
    Dynamics model $g$, hypernet $A$ are linear.
\end{assumption} 
\begin{assumption} \label{ass:unicity_param}
    $\forall e\in\E$, parameters $p^e \in \mathbb{R}^{d_p}$ are unique.
\end{assumption} 
\begin{assumption} \label{ass:dim_code}
    Context vectors have dimension $d_\xi=d_p$.
\end{assumption} 
\begin{assumption} \label{ass:basis_dyn}
    The system parameters $p$ of all dynamics $f$ in a basis $\mathcal{B}$ of $\F$ are known.
\end{assumption} 
\begin{proposition}[Identification under linearity. Proof in \Cref{app:params_estimation}] 
    Under \Cref{ass:linear_dyn,ass:linear_model,ass:unicity_param,ass:dim_code,ass:basis_dyn}, system parameters are perfectly identified on new environments if the dynamics model $g$ and hypernetwork $A$ satisfy $\forall f \in \mathcal{B}$ with system parameter $p$, $g_{A(p)}=f$.
    \label{prop:context_bijectivity}
\end{proposition}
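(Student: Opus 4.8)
The plan is to show that the composite map $\Phi\colon\xi\mapsto g_{A(\xi)}$ coincides with the true parametrization $\mathcal{P}\colon p\mapsto f_{p}$, after which parameter identification on new environments follows from injectivity. First I would fix the linear-algebraic picture: by \Cref{ass:linear_dyn} the map $\mathcal{P}\colon\R^{d_p}\to\F$, $p\mapsto f_{p}$, is linear; by \Cref{ass:unicity_param} it is injective; and since its image contains the basis $\mathcal{B}$ of $\F$ (whose parameters are known by \Cref{ass:basis_dyn}), it is also onto, hence a linear isomorphism $\R^{d_p}\xrightarrow{\sim}\F$. Consequently the parameter vectors $\{p_{f}:f\in\mathcal{B}\}$ form a basis of $\R^{d_p}$, and by \Cref{ass:dim_code} the context space has matching dimension $d_{\xi}=d_p$, so no rank mismatch can arise.

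Next I would use linearity of the model. By \Cref{ass:linear_model} both $A$ and $\theta\mapsto g_{\theta}$ are linear, hence so is $\Phi(\xi)\triangleq g_{A(\xi)}$, viewed as a map $\R^{d_\xi}\to\F$. The hypothesis $g_{A(p_{f})}=f$ for every $f\in\mathcal{B}$ says exactly that $\Phi$ agrees with $\mathcal{P}$ on the basis $\{p_{f}:f\in\mathcal{B}\}$ of $\R^{d_p}$; two linear maps agreeing on a basis are equal, so $\Phi=\mathcal{P}$, and in particular $\Phi$ is injective.

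Finally I would pass to adaptation. For a new environment $e\in\Ead$ with dynamics $f^{e}=f_{p^{e}}$, the adaptation objective \Cref{eq:coda-ts} is a convex quadratic in $\xi^{e}$ (\Cref{prop:optimal_context}); in the interpolation regime — vanishing locality weight and an observed trajectory informative enough to pin the linear vector field down on the states it visits — its minimizer satisfies $g_{A(\xi^{e})}=f^{e}$, i.e.\ $\Phi(\xi^{e})=f^{e}=\mathcal{P}(p^{e})=\Phi(p^{e})$. Injectivity of $\Phi$ then forces $\xi^{e}=p^{e}$, so the true system parameters are perfectly identified from the learned context; and if on $\Etr$ the basis environments instead carry contexts $\xi_{f}$ rather than $p_{f}$, repeating the argument with $\Phi$ fitted through the $\xi_{f}$ yields that $\xi^{e}$ and $p^{e}$ are merely in linear bijection, consistent with \Cref{fig:context_param}.

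The step I expect to be the main obstacle is this last one: the clean identity $\xi^{e}=p^{e}$ hinges on the adaptation returning a \emph{zero-loss} context rather than one pulled away by the penalty $R(W,\xi^{e})$ in \Cref{eq:coda-ts}. To make this precise one argues that the MSE term attains its minimum value $0$ only on the fiber $\Phi^{-1}(f^{e})$, which by the injectivity just established is the single point $\{p^{e}\}$ — so the penalty only breaks ties within a singleton and is harmless — together with the elementary fact that a discretized trajectory of a linear ODE/PDE of the assumed form determines the generating field on the set of visited states. By contrast the linear-algebra core ($\mathcal{P}$ an isomorphism, and $\Phi=\mathcal{P}$ from agreement on a basis) is short and is precisely what \Cref{ass:linear_dyn,ass:linear_model,ass:unicity_param,ass:dim_code,ass:basis_dyn} are designed to supply.
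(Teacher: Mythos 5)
Your proposal is correct and follows essentially the same route as the paper's proof: establish that $p \mapsto f_p$ is a linear isomorphism, deduce that $\xi \mapsto g_{A(\xi)}$ coincides with it (you do this directly via agreement on a basis of $\mathbb{R}^{d_p}$; the paper reaches the same identity $g_{A(\cdot)}=\psi$ via a dimension count on the image followed by the same basis-matching observation), and then invert. Your closing discussion of whether the adaptation objective actually returns a zero-loss context addresses a step the paper leaves implicit --- it simply takes $g_{A(\xi^e)}=f^e$ for granted and applies $g_{A}^{-1}=\psi^{-1}$ --- so it is a reasonable extra precaution rather than a different argument.
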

Intuitively, \Cref{prop:context_bijectivity} says that given some observations representative of the degrees of variation of the data (a basis of $\F$) and given the system parameters for these observations (\Cref{ass:basis_dyn}), we are guaranteed to recover the parameters of new environments for a family systems. 
This strong guarantee requires strong conditions.
\Cref{ass:linear_dyn,ass:linear_model} state that the systems should be linear \wrt inputs and that the dynamics model should be linear too.
Linearity of the hypernetwork is not an issue as detailed in \Cref{subsec:model}.
\Cref{ass:unicity_param} applies to several real-world systems used in our experiments (\cf \Cref{app:params_estimation} \cref{lemma:injectivity_lv,lemma:injectivity_ns}).
\Cref{ass:dim_code} is not restrictive as we showed that $d_p$ is recovered through cross-validation (\Cref{fig:ablation_dim}).

We propose an extension of \Cref{prop:context_bijectivity} in \Cref{prop:context_bijectivity_NN} to nonlinear systems \wrt inputs and nonlinear dynamics model $g$.
This alleviates the linearity assumption in \Cref{ass:linear_dyn,ass:linear_model} and better fits our experimental setting.
\begin{proposition}[Local identification under non-linearity. Proof in \Cref{app:params_estimation}] \label{prop:context_bijectivity_NN}
    For linearly parametrized systems, nonlinear \wrt inputs and nonlinear dynamics model $g_\theta$ with parameters output by a linear hypernetwork $A$, $\exists \alpha > 0$ \sut system parameters are perfectly identified $\forall e\in\E$ where $\|\xi^e\|\leq \alpha$ if $\forall f\in \mathcal{B}$ with parameter $p$, $g_{A(\alpha \frac{p}{\|p\|})} = f$.
\end{proposition}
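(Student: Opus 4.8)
The plan is to run the argument behind \Cref{prop:context_bijectivity} and then localize it so that it survives the nonlinearity of $g_\theta$. First I would fix coordinates on $\F$. Since the systems are linearly parametrized with $d_p$ varying parameters, the basis $\mathcal{B}=\{f_1,\dots,f_{d_p}\}$ of $\F$ from \Cref{ass:basis_dyn} makes every environment field $f^e=\sum_{j=1}^{d_p}p^e_j f_j$, with the coordinate map $f\mapsto p$ a linear isomorphism onto $\mathbb{R}^{d_p}$ (using \Cref{ass:unicity_param} for injectivity). Writing $p_j$ for the known parameter of $f_j$ and $\xi_j\triangleq\alpha\,p_j/\|p_j\|$, the hypothesis is exactly $g_{A(\xi_j)}=f_j$; and since $A$ is linear and $d_\xi=d_p$ (\Cref{ass:dim_code}), the $\xi_j$ are scalar multiples of a basis of $\mathbb{R}^{d_\xi}$, hence themselves a basis, so every admissible context decomposes uniquely as $\xi^e=\sum_j c_j^e\,\xi_j$.

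Second, I would pin down the correspondence $p^e\leftrightarrow\xi^e$. Because $A$ is affine, $A(\xi^e)=\bigl(1-\sum_j c_j^e\bigr)\theta^c+\sum_j c_j^e A(\xi_j)$. If $g_\theta$ were linear in $\theta$ this would give $g_{A(\xi^e)}=\bigl(1-\sum_j c_j^e\bigr)g_{\theta^c}+\sum_j c_j^e f_j$; forcing the left side into $\F$ (which it must be, since it equals $f^e$) and matching $\mathcal{B}$-coordinates yields an \emph{affine} bijection between $(c_j^e)_j$ and $p^e$, hence between $\xi^e$ and $p^e$ — this is the exact identification of \Cref{prop:context_bijectivity}, valid for all $\xi^e$. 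For a genuinely nonlinear $g_\theta$ the splitting fails, and I would instead work on the ball $\{\|\xi\|\le\alpha\}$: the map $\Phi:\xi\mapsto\mathrm{coord}\bigl(g_{A(\xi)}\bigr)$ (defined wherever $g_{A(\xi)}\in\F$) takes value $p_j$ at $\xi_j$, and I would show, via a first-order expansion of $g_{\theta^c+W\xi}$ around $\theta^c$, that its differential is injective on $\mathbb{R}^{d_\xi}$ — the $d_p=d_\xi$ directional derivatives of $g_\theta$ at $\theta^c$ along the columns of $W$ being linearly independent in $\F$, which the basis-fitting hypothesis supplies since they realize the independent fields $f_1,\dots,f_{d_p}$. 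The inverse function theorem then makes $\Phi$ a diffeomorphism onto its image on a neighborhood of $\vzero$, and $\alpha$ is taken small enough that $\{\|\xi\|\le\alpha\}$ sits inside it; on that ball $\xi^e\mapsto p^e=\Phi(\xi^e)$ is a bijection, which is the claimed perfect identification on $\{e:\|\xi^e\|\le\alpha\}$.

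The main obstacle is exactly this passage from the global, exact linear-case computation to the nonlinear case: once $g_\theta$ is nonlinear in $\theta$ the map $\xi\mapsto g_{A(\xi)}$ is neither affine nor globally injective, so two things must be handled with care — that $g_{A(\xi)}$ still lands in $\F$ for every $\|\xi\|\le\alpha$ (so that ``the parameter of the fitted system'' is defined at all), and that the Jacobian of $\Phi$ is non-degenerate near $\vzero$. Both are what the basis-fitting hypothesis, together with the explicit restriction to the radius-$\alpha$ ball, is designed to provide; they are also precisely why the conclusion here is necessarily local (a threshold $\alpha$ and the constraint $\|\xi^e\|\le\alpha$) rather than the unconditional statement of \Cref{prop:context_bijectivity}.
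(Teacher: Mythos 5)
Your overall strategy is the paper's: localize the argument of \Cref{prop:context_bijectivity} to the ball $\{\|\xi\|\leq\alpha\}$ via a first\dash order expansion of $g_{A(\cdot)}$ around $\vzero$, note that the rescaled basis points $\alpha\,p_i/\|p_i\|$ lie in that ball, and re-run the linear-case identification there. The paper does exactly this, but more bluntly than you do: it takes the first-order Taylor expansion to be \emph{exact} for $\|\xi^e\|<\alpha$, i.e.\ $g_{\theta^e}=g_{\theta^c}+\nabla_\theta g_{\theta^c}W\xi^e$ on the whole ball, so that $g_{A(\cdot)}$ is genuinely affine there and the proof of \Cref{prop:context_bijectivity} applies verbatim to the neighborhood.

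The one step of your inverse-function-theorem packaging that does not go through as written is the injectivity of $D\Phi(\vzero)$. You claim the basis-fitting hypothesis "supplies" the linear independence of the directional derivatives of $g_\theta$ at $\theta^c$ along the columns of $W$ because "they realize the independent fields $f_1,\dots,f_{d_p}$". But the hypothesis $g_{A(\alpha p_i/\|p_i\|)}=f_i$ constrains the \emph{values} of $g_{A(\cdot)}$ at $d_p$ points on the sphere of radius $\alpha$; for a genuinely nonlinear $g$ those values impose no constraint whatsoever on the differential at $\vzero$, so the non-degeneracy of the Jacobian is not a consequence of the hypothesis — it is an additional assumption. (Even granting exact linearity on the ball, the directional derivatives realize $\frac{\|p_i\|}{\alpha}(f_i-g_{\theta^c})$ rather than $f_i$, and $\{f_i-g_{\theta^c}\}_i$ can be linearly dependent when $g_{\theta^c}$ lies in the affine hull of $\mathcal{B}$ — a degeneracy neither you nor the paper rules out.) The paper closes this hole by fiat: once the expansion is declared exact on the ball, the fitting conditions at radius $\alpha$ \emph{do} pin down the linear part, and the dimension count of \Cref{prop:context_bijectivity} forces bijectivity. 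So either adopt that idealization explicitly (in which case your IFT layer is unnecessary and your linear-case computation already finishes the proof), or state the injectivity of the differential as a hypothesis; as written, deriving it from the basis-fitting condition is the gap.
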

\Cref{prop:context_bijectivity_NN} states that system parameters are recovered for environments with context vectors of small norm, under a rescaling condition on true system parameters.
\Cref{prop:context_bijectivity_NN} explains why estimation error increases when system parameters differ greatly from training ones, as these systems are more likely to violate the norm condition.

\section{Related Work}
\label{sec:related_work}
We review Out-of-Distribution (OoD), Multi-Task Learning (MTL) and meta-learning methods and their existing extensions to dynamical systems.

\paragraph{Learning in Multiple Environments}
OoD methods extend the ERM objective to learn domain invariants \eg via robust optimization \citep{Sagawa2020} or Invariant Risk Minimization (IRM) \cite{Arjovsky2019, KruegerCHZBZLC2021}.
However, they are not adapted to our problem as an unique model is learned. 
\ours is closer to meta-learning and MTL.
A standard meta-learning approach is gradient-based meta-learning (GBML), which learns a model initialisation through bi-level optimization.
GBML can then adapt to a new task with few gradient steps. 
The standard GBML method is MAML \citep{Finn2017}, extended in various work.
ANIL \citep{Raghu2020} restricts meta-learning to the last layer of a classifier while other work improve adaptation by preconditioning the gradient \citep{Lee2018,Flennerhag2020,Park2019} \eg Meta-SGD \citep{Li2017} learns dimension-wise inner-loop learning rates.
Contextual meta-learning approaches in \citet{ZintgrafSKHW2019,Garnelo2018} partition parameters into context parameters, adapted on each task, and meta-trained parameters, shared across tasks. 
\ours follows the same objective of learning a low-dimensional representation of each task but generalizes these approaches with hypernetworks.
For MTL, a standard approach is hard-parameter sharing which shares earlier layers of the network \citep{Caruana1997}.
Several extensions were proposed to learn more efficiently from a set of related tasks \citep{Rebuffi2017, Rebuffi2018}. 
Yet, MTL does not address adaptation to new tasks, which is the focus of \ours. 
Some extensions have also considered this problem, mainly for classification \citep{Wang2021MTL,Requeima2019}.

\paragraph{Generalization for Dynamical Systems}
Only few work have considered generalization for dynamical systems.
LEADS \citep{Yin2021LEADS} is a MTL approach that performs adaptation in functional space.
\ours operates in parameter space, making adaptation more expressive and efficient, and scales better with the number of environments as it does not require training a full new network per environment as LEADS does.
A second work is DyAd \citep{Wang2021c}, a context-aware meta-learning method.
DyAd adapts the dynamics model by decoding a time-invariant context, obtained by encoding observed states.
However, unlike \ours, DyAd uses weak supervision obtained from  physics quantities to supervise the encoder, which may not always be possible.
Moreover, it performs AdaIN modulation (instance normalization + FiLM), a particular case of hypernetwork decoding, which performed worse than \ours in our experiments.

\section{Conclusion}
We introduced \ours, a new framework to learn context-informed data-driven dynamics models on multiple environments.
\ours generalizes with little retraining and few data to new related physical systems and outperforms prior methods on several real-world nonlinear dynamics.
Many promising applications of \ours are possible, notably for spatiotemporal problems, \eg partially observed systems,
reinforcement learning, or NN-based simulation.

\paragraph{Acknowledgement}
We acknowledge the financial support from DL4CLIM ANR-19-CHIA-0018-01, DEEPNUM ANR-21-CE23-0017-02, OATMIL ANR-17-CE23-0012, RAIMO ANR-20-CHIA-0021-01 and LEAUDS ANR-18-CE23-0020.

\bibliographystyle{icml2022}
\bibliography{ref}

\newpage

\appendix

\twocolumn[
\icmltitlerunning{Generalizing to New Physical Systems via Context-Informed Dynamics Model: Supplementary Material}
\icmltitle{{Generalizing to New Physical Systems via Context-Informed Dynamics Model} \\
\normalfont{Supplementary Material}}
\vskip 0.1in
]

\section{Discussion}
\label{app:discussion}
We discuss in more details the originality and differences of \ours \wrt several Multi-Task Learning (MTL) and gradient-based or contextual meta-learning methods illustrated in \Cref{fig:baselines}.
We consider CAVIA \citep{ZintgrafSKHW2019}, MAML \citep{Finn2017}, ANIL \citep{Raghu2020}, hard-parameter sharing MTL \citep{Caruana1997,Ruder2017}, LEADS \citep{Yin2021LEADS}.

\subsection{Adaptation Rule}
\label{app:adaptation_rule}
We compare the adaptation rule in \Cref{eq:adaptation_rule} \wrt these work.

\paragraph{GBML}
Given $k$ gradient steps, MAML defines 
\begin{gather}
   \theta^e = \theta^c + (- \eta \sum_{i=0}^k \nabla_{\theta} \Loss(\theta^e_i, \D^e))  \label{eq:maml} \\ \text{where~} 
   \begin{cases}
       \theta^e_{i+1}=\theta^e_{i} - \eta \nabla_{\theta} \Loss(\theta^e_i, \D^e) & i>0\\
       \theta^e_0 = \theta^c & i=0 \notag \\ 
   \end{cases}
\end{gather}
With $\delta\theta^e\triangleq-\eta\sum_{i=0}^k \nabla_{\theta} \Loss(\theta^e_i, \D^e)$, \Cref{eq:adaptation_rule} thus includes MAML. 
ANIL and related GBML methods \citep{Lee2019,Bertinetto2018} restrict \Cref{eq:maml} to parameters of the final layer, while remaining parameters are shared.
\begin{figure}[t!]
    \centering
    \includegraphics[width=\linewidth]{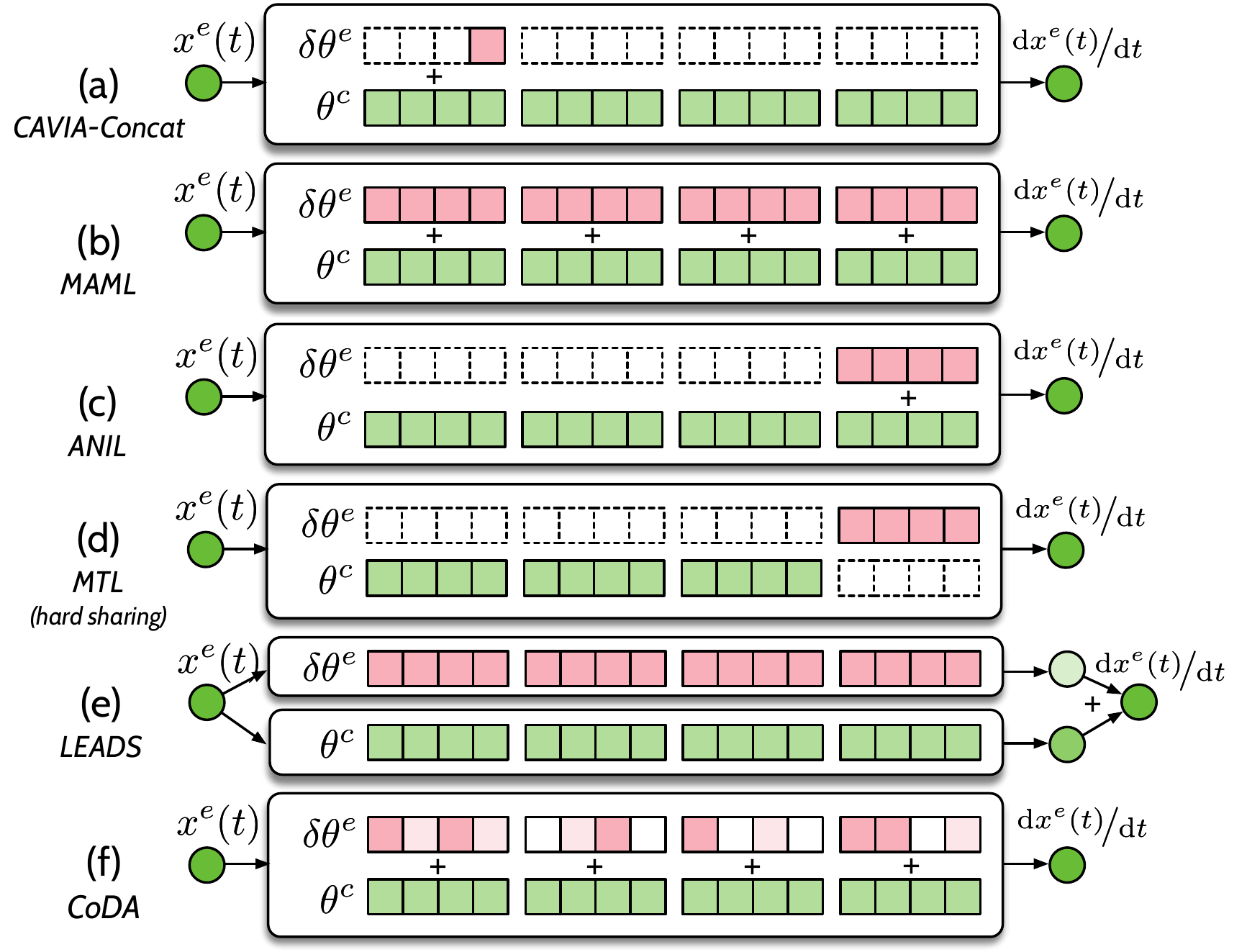}
	\caption{Illustration of representative baselines for multi-environment learning. 
	Shared parameters are {\color{blue} blue}, environment-specific parameters are {\color{red} red}. 
	\mytag{(a)}{approach_a} CAVIA-Concat acts upon the bias of the first layer with conditioning via concatenation. 
	\mytag{(b)}{approach_b} MAML acts upon all parameters without penalization nor prior structure information. 
	\mytag{(c)}{approach_c} ANIL restricts meta-learning to the final layer. 
	\mytag{(d)}{approach_d} Hard-sharing MTL train the final layer from scratch, while the remaining is a \textit{hard-shared}.
	\mytag{(e)}{approach_e} LEADS sums the output of a common and a environment-specific network. 
	\mytag{(f)}{approach_f} \ours acts upon a subspace of the parameter space with a locality constraint.} \label{fig:baselines}
	\vskip -0.05in
\end{figure}

\paragraph{MTL}
MTL models can be identified to \Cref{eq:maml}.
They fix $\theta^c\triangleq \vzero$, removing the ability of performing fast adaptation as parameters are retrained from scratch instead of being initialized to $\theta^c$.
Hard-parameter sharing MTL restricts the sum in \Cref{eq:maml} to the final layer, as ANIL. LEADS sums the outputs of a shared and an environment specific network, thus splits parameters into two independent blocks that do not share connections.

\subsection{Decoding for Context-Informed Adaptation}
\label{app:conditioning_contextual}
We show that conditioning strategies in contextual meta-learning for decoding context vectors $\xi^e$ into $\delta\theta^e$ are a special case of hypernetwork-decoding.
The two main approaches are conditioning via concatenation and conditioning via feature modulation \aka FiLM \citep{Perez2018}.

\subsubsection{Conditioning via Concatenation}
We show that conditioning via concatenation is equivalent to a linear hypernetwork $A_\phi:\xi^e\mapsto W\xi^e + \theta^c$ with $\phi=\{\theta^c, W\}$ that only predicts the bias of the first layer of $g_\theta$.

We assume that $g_\theta$ has $N$ layers and analyze the output of the first layer of $g_\theta$, omitting the nonlinearity, when the input $x \in \mathbb{R}^{d_x}$ in an environment $e\in\E$ is concatenated to a context vector $\xi^e \in \mathbb{R}^{d_\xi}$.
We denote $x \| \xi^e$ the concatenated vector, $n_h$ the number of hidden units of the first layer, $W^1 \in \mathbb{R}^{n_h \times(d_x+d_\xi)}$ and $b^1 \in \mathbb{R}^{n_h}$ the weight matrix and bias term of the first layer, $W^2, \cdots, W^N$ and $b^2, \cdots, b^N$ those of the following layers.
The output of the first layer is
$$
    y^1=W^1 \cdot x \| \xi^e+b^1
$$
We split $W^1$ along rows into two weight matrices, $W_x^1 \in \mathbb{R}^{n_h \times d_x}$ and $W_\xi^1 \in \mathbb{R}^{n_h \times d_\xi}$ \sut
$$
    y^1=W_x^1 \cdot x+W_\xi^1 \cdot \xi^e+b^1
$$
$b_\xi^1 \triangleq W_\xi^1 \cdot \xi^e+b^1$ does not depend on $x$ and corresponds to an environment-specific bias. Thus, concatenation is included in \Cref{eq:adaptation_rule} when
\begin{align*}
    \theta^c &\triangleq \{W_x^1, b^1, W^2, b^2, \cdots, W^N, b^N\} \\
    \delta\theta^e &\triangleq \{~~0~~,b_\xi^1, ~~0~~, 0~, \cdots, ~~~0~~~, 0\}
\end{align*}
where $\delta\theta^e$ is decoded via a hypernetwork with parameters $\{\theta^c, W\triangleq(0, W_\xi^1, 0, \cdots, 0)\}$.

\subsubsection{Conditioning via Feature Modulation}
We show that conditioning via FiLM is equivalent to a linear hypernetwork $A_\phi:\xi^e\mapsto W\xi^e + \theta^c$ with $\phi=\{\theta^c, W\}$ that only predicts the batch norm (BN) statistics of $g_\theta$.

For simplicity, we focus on a single BN layer and denote $\{h_i\}_{i=1}^M$, $M$ feature maps output by preceding convolutional layers.
These feature maps are first normalized then rescaled with an affine transformation.
Rescaling is similar to a FiLM layer that transforms linearly $\{h_i\}_{i=1}^M$ with:
$$
    \forall i\in\{1,\cdots,M\}, \text{FiLM}(h_i)=\gamma_i \odot h_i + \beta
$$ 
where $\gamma,\beta\in\mathbb{R}^M$ are output by a NN $f_\psi$ conditioned on the context vectors $\xi^e$ \ie $[\gamma,\beta]=f_\psi(\xi^e)$.
In general, $f_\psi$ is linear \sut $f_\psi(\xi^e) \triangleq W_\xi\xi^e+b_\xi$, with $\psi=\{W_\xi, b_\xi\}$.
Then $\gamma=W_\xi^\gamma \xi^e + b_\xi^\gamma, \beta=W_\xi^\beta \xi^e + b_\xi^\beta$.

Thus, for this layer, modulation is included in \Cref{eq:adaptation_rule} when
\begin{align*}
    \delta\theta^e &\triangleq W\xi^e = \{W_\xi^\gamma \xi^e, W_\xi^\beta \xi^e\} \\
    \theta^c &\triangleq b_\xi = \{b_\xi^\gamma,b_\xi^\beta\} 
\end{align*}
where $\delta\theta^e$ is decoded via hypernetwork $f_\psi \triangleq A_\phi$ with parameters $\phi=\{\theta^c\triangleq b_\xi, W\triangleq W_\xi\}$.

\section{Proofs}
\label{app:proof}

\begin{customprop}{\ref{prop:lr_linearit}}
    Given a class of linearly parametrized dynamics $\F$ with $d_p$ varying parameters, $\forall \theta^c\in\mathbb{R}^{d_\theta}$, subspace $\G_{\theta^c}$ in \Cref{def:grad_dir} is low-dimensional and satisfies $\dim(\G_{\theta^c}) \leq d_p \ll d_{\theta}$.
\end{customprop} 

\begin{proof}
    We define the linear mapping $\psi: p \in \mathbb{R}^{d_p} \rightarrow f \in \F$ from parameters to dynamics \sut $\psi(\mathbb{R}^{d_p})=\F$. 
    Given this linear mapping, we first prove the following lemma: $\dim(\F)\leq d_p$.
    The proof is based on surjectivity of $\psi$ onto $\F$, given by definition. 
    We define $\{b_i\}_{i=1}^{d_p}$ a basis of $\mathbb{R}^{d_p}$.
    Given $f \in \F$, $\exists p \in \mathbb{R}^{d_p}, \psi(p)=f$. 
    We note $p = \sum_{i=1}^{d_p} \lambda_i b_i$ where $\forall i, \lambda_i \in \mathbb{R}$. 
    Then $\psi(p) = \sum_{i=1}^{d_p} \lambda_i \psi(b_i)$. 
    We extract a basis from $\{\psi(b_i)\}_{i=1}^{d_p}$ and denote $d_f \leq d_p$ the number of elements in this basis. 
    This basis forms a basis of $\F$ \ie $d_f=\dim(\F) \leq d_p$.
    
    Now, given $\theta \in\mathbb{R}^{d_\theta}$ and $f^e \in \F$. We precise that given a (probability) measure $\rho_\gX$ on $\gX\subset\R^d$, the function space $\F\subset L^2(\rho_x,\R^d)$, then
    \begin{equation*}
        \Loss(\theta, \D^e) \triangleq \int_{\gX} \|(f^e-g_\theta)(x)\|_2^2\diff \rho_\gX(x) = \|f^e - g_\theta\|^2_2
    \end{equation*}
    The gradient of $\Loss(\theta, \D^e)$ is then
    \begin{align*}
        \nabla_\theta \Loss({\theta^c}, &\D^e) = \nabla_\theta \int_{\gX} \|f^e(x)-g_{\theta^c}(x)\|_2^2\diff \rho_\gX(x) \\
        &= \int_{\gX} \nabla_\theta  \|f^e(x)-g_{\theta^c}(x)\|_2^2\diff \rho_\gX(x) \\
        &= -2\int_{\gX} \mathbf{J}_\theta g_{\theta^c}(x)^\top(f^e(x)-g_{\theta^c}(x))\diff \rho_\gX(x) \\
        &= -2 D_\theta g_{\theta^c}^\top (f^e - g_{\theta^c})
    \end{align*}
    where $\mathbf{J}_\theta g_\theta(x)$ is the Jacobian matrix of $g_{\theta^c}$ \wrt $\theta$ at point $x$. $\theta\mapsto D_\theta g_{\theta^c}$ is the differential of $g_\theta$. Note that $D_\theta g_{\theta^c}:\R^{d_\theta} \to \F$ is a linear map (analogue of Jacobian matrix). $D_\theta g_{\theta^c}^\top: \F^\star \rightarrow \mathbb{R}^{d_\theta}$ denotes its adjoint (analogue of transposed matrix), which is also a linear map. 
    
    As $\G_{\theta^c}\subseteq Im(D_\theta g_\theta^\top)$, then according to Rank-nullity theorem, $\dim(\G_{\theta^c}){}\leq \dim(Im(D_\theta g_{\theta^c}^\top)){}= \dim(\F) - \dim(Ker(D_\theta g_{\theta^c}^\top)){}\leq \dim(\F){}\leq d_p$.
\end{proof}

\begin{customprop}{\ref{prop:optimal_context}}
    Given $\{\theta^c,W\}$ fixed, if $\|\cdot\|{}=\ell_2$, then \Cref{eq:coda-ts} is quadratic. 
   If $\lambda^\prime W^\top W$ or $\bar{H}^e(\theta^c)=W^\top \nabla^2_\theta \Loss(\theta^c, \D^e) W$ are invertible then $\bar{H}^e(\theta^c)+\lambda^\prime W^\top W$ is invertible except for a finite number of $\lambda^\prime$ values.
   The problem in \Cref{eq:coda-ts} is then also convex and admits an unique solution, $\{{\xi^e}^\star\}_{e\in\Ead}$.
   With $\lambda^\prime \triangleq 2 \lambda$,
    \begin{equation*}
        {\xi^e}^* = -\Big({\bar{H}^e(\theta^c)+\lambda^\prime W^\top W}\Big)^{-1} W^\top \nabla_\theta\Loss(\theta^c, \D^e)
    \end{equation*}
    $\bar{H}^e(\theta^c)+\lambda^\prime W^\top W$ is invertible $\forall \lambda^\prime$ except a finite number of values if $\bar{H}^e(\theta^c)$ or $\lambda^\prime W^\top W$ is invertible. 
\end{customprop}
\begin{proof}
    When $\|\cdot\|=\ell_2$, we consider the following second order Taylor expansion of $\Loss_{\mathrm{r}}(\theta, \D^e)\triangleq\Loss(\theta, \D^e) + \lambda \|\theta-\theta^c\|^2_2$ at $\theta^c$, where $\delta\theta^e=\theta-\theta^c=W\xi^e$.
    \begin{multline}
        \smash[t]{\Loss_{\mathrm{r}}(\theta^c + \delta\theta^e, \D^e) = \Loss(\theta^c, \D^e) + \nabla_\theta \Loss(\theta^c, \D^e)^\top \delta\theta^e} + \\
        \frac{1}{2}{\delta\theta^e}^\top \Big(\nabla^2_\theta \Loss(\theta^c, \D^e)+ 2\lambda \mathrm{Id}\Big) \delta\theta^e + o(\|\delta\theta^e\|^3_2)
        \label{eq:taylor_expansion}
    \end{multline}
    With $\delta\theta^e=W\xi^e$, we expand \Cref{eq:taylor_expansion} into
    \begin{align*}
        &\Loss_{\mathrm{r}}(\theta^c + W\xi^e, \D^e) = \Loss(\theta^c, \D^e) + (W^\top \nabla_\theta \Loss(\theta^c, \D^e))^\top \xi^e \\
        &+ \frac{1}{2}{\xi^e}^\top (W^\top \nabla^2_\theta \Loss(\theta^c, \D^e) W + 2\lambda W^\top W) \xi^e + o(\|\delta\theta^e\|^3_2)
    \end{align*}
    \ie with $\bar{H}^e(\theta^c)=W^\top \nabla^2_\theta \Loss(\theta^c, \D^e) W$ and $\lambda^\prime=2\lambda$
    \begin{align}
        &\Loss_{\mathrm{r}}(\theta^c + W\xi^e, \D^e) = \Loss(\theta^c, \D^e) + (W^\top \nabla_\theta \Loss(\theta^c, \D^e))^\top \xi^e \notag\\
        &+ \frac{1}{2}{\xi^e}^\top \Big(\bar{H}^e(\theta^c) + \lambda^\prime W^\top W\Big) \xi^e + o(\|\delta\theta^e\|^3_2)
        \label{eq:taylor_exp_low}
    \end{align}
    \Cref{eq:taylor_exp_low} is quadratic.
    If $\bar{H}^e(\theta^c)+\lambda^\prime W^\top W$ is invertible, then the problem is also convex with unique solution
    \begin{equation*}
        {\xi^e}^* = -\Big({\bar{H}^e(\theta^c)+\lambda^\prime W^\top W}\Big)^{-1} W^\top \nabla_\theta\Loss(\theta^c, \D^e)
    \end{equation*}
    
    $\bar{H}^e(\theta^c)$ and $\lambda^\prime W^\top W$ are two square matrices. 
    The application $p:\lambda^\prime\mapsto \det(\bar{H}^e(\theta^c)+\lambda^\prime W^\top W)$ is well-defined and forms a continuous polynomial.
    Thus either it equals zero or it has a finite number of roots.
    If $\bar{H}^e(\theta^c)$ or $\lambda^\prime W^\top W$ is invertible, then $p(0)=\det(\bar{H}^e(\theta^c))\neq0$ or $p(\infty)\sim \det(\lambda^\prime W^\top W)\neq0$.
    Thus $p\neq0$ has a finite number of roots \ie $\bar{H}^e(\theta^c)+\lambda^\prime W^\top W$ is invertible $\forall \lambda^\prime$ except a finite number of values corresponding to the roots of $p$.
\end{proof}

\section{System Parameter Estimation}
\label{app:params_estimation}
\begin{customprop}{\ref{prop:context_bijectivity}}
    Under \Cref{ass:linear_dyn,ass:linear_model,ass:unicity_param,ass:dim_code,ass:basis_dyn}, system parameters are perfectly identified on new environments if the dynamics model $g$ and hypernetwork $A$ satisfy $\forall f \in \mathcal{B}$ with system parameter $p$, $g_{A(p)}=f$.
\end{customprop}
\begin{proof}
    We define the linear mapping $\psi: p \in \mathbb{R}^{d_p} \rightarrow f \in \F$ from parameters to dynamics \sut $\psi(\mathbb{R}^{d_p})=\F$ (\Cref{ass:linear_dyn}). 
    Unicity of parameters (\Cref{ass:unicity_param}) implies that $\psi$ is bijective with inverse $\psi^{-1}$, thus $\dim(\F)=\dim(\mathbb{R}^{d_p})=d_p$.
    Given a basis $\mathcal{B}=\{f_i\}_{i=1}^{d_p}$ of $\F$, we denote $p_i=\psi^{-1}(f_i)$.
    We fix $g, A$ \sut $\forall i \in \{1, ..., d_p\}, g_{A(p_i)} = f_i = \psi(p_i)$.
    This is possible as $f_i$ and $g$ are linear \wrt inputs (\Cref{ass:linear_dyn,ass:linear_model}) and $p_i$ are known (\Cref{ass:basis_dyn}).
    
    $g, A$ are linear (\Cref{ass:linear_model}), thus $g_{A(\cdot)}$ is linear with inputs in $\mathbb{R}^{d_\xi}$.
    Then, $\dim(\Image(g_{A(\cdot)})) \leq d_\xi$.
    Moreover, $\forall i \in \{1, ..., d_p\}, f_i \in \Image(g_{A(\cdot)})$, thus $\F \subset \Image(g_{A(\cdot)})$ \ie $d_p \leq \dim(\Image(g_{A(\cdot)}))$.
    Thus, $d_p \leq \dim(\Image(g_{A(\cdot)})) \leq d_\xi$.
    \Cref{ass:dim_code} states that $d_\xi=d_p$, s.t. $\dim(\Image(g_{A(\cdot)}))=d_p$.
    As $\F \subset \Image(g_{A(\cdot)})$ and $\dim(\F)=\dim(\Image(g_{A(\cdot)}))$, $\F=\Image(g_{A(\cdot)})$ \ie $g_{A(\cdot)}$ is surjective onto $\F$. 
    As $\dim(\F)=d_\xi$, the dimension of the input space, $g_{A(\cdot)}$ is bijective.
    
    By bijectivity of $\psi$, $\{p_i\}_{i=1}^{d_p}$ forms a basis of $\mathbb{R}^{d_p}$. 
    $g_{A(\cdot)}$ and $\psi$ map this basis to the same basis $\{f_i\}_{i=1}^{d_p}$ of $\F$. 
    As both mappings are bijective, this implies that $g_{A(\cdot)}=\psi(\cdot)$.
    This means that $\forall e \in \E, {g_{A}}^{-1}(f^e)=\psi^{-1}(f^e)$ \ie system parameters $p^e$ are recovered.
\end{proof}

\begin{customprop}{\ref{prop:context_bijectivity_NN}}
    For linearly parametrized systems, nonlinear \wrt inputs and nonlinear dynamics model $g_\theta$ with parameters output by a linear hypernetwork $A$, $\exists \alpha > 0$ \sut system parameters are perfectly identified $\forall e\in\E$ where $\|\xi^e\|\leq \alpha$ if $\forall f\in \mathcal{B}$ with parameter $p$, $g_{A(\alpha \frac{p}{\|p\|})} = f$.
\end{customprop}
\begin{proof}
    On environment $e\in\E$, $g_{\theta^e}$ is differentiable \wrt $\theta^e=A(\xi^e)=\theta^c+W\xi^e \in \mathbb{R}^{d_\theta}$.
    We perform a first order Taylor expansion of $g_{A(\cdot)}$ around $\vzero$.
    We note $\alpha > 0,$ \sut $\forall \xi^e\in\mathbb{R}^{d_\xi}$ that satisfy $\|\xi^e\|<\alpha$, we have $g_{\theta^e} = g_{\theta^c} + \nabla_\theta g_{\theta^c} W \xi^e$.
    $g_{A(\cdot)}$ is then linear in the neighborhood of $\vzero$ defined by $\alpha$.
    $\forall i\in\llbracket1, d_p\rrbracket, \alpha \frac{p_i}{\|p_i\|}$ belongs to this neighborhood \sut the proof of \Cref{prop:context_bijectivity} applies to this neighborhood if $\forall i\in\llbracket1, d_p\rrbracket, g_{A(\alpha \frac{p_i}{\|p_i\|})} = f_i$, where $\mathcal{B}=\{f_i\}_{i=1}^{d_p}$ is a basis of $\F$.
\end{proof}

We now show the validity of the unicity condition (\Cref{ass:unicity_param}) for two linearly parametrized systems.
\begin{lemma}
    There is an unique set of parameters in $\mathbb{R}^{4}$ for a Lotka-Volterra (\texttt{LV}) system.
    \label{lemma:injectivity_lv}
\end{lemma}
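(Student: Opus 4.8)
The plan is to prove \Cref{lemma:injectivity_lv} by showing that the map sending parameters $(\alpha,\beta,\gamma,\delta)\in\R^4$ to the Lotka--Volterra vector field is injective, so that the four coefficients are uniquely determined by $f$. Recall (\cf \Cref{eq:lv}) that a \texttt{LV} system with parameters $(\alpha,\beta,\gamma,\delta)$ has vector field $f_{\alpha,\beta,\gamma,\delta}(x,y) = \big(\alpha x - \beta xy,\ \delta xy - \gamma y\big)$, and, as in the proof of \Cref{prop:lr_linearit}, view $\F \subset L^2(\rho_\X,\R^2)$ so that two dynamics coincide when they agree $\rho_\X$-almost everywhere on $\X\subseteq\R^2$.

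First I would take two parameter vectors $(\alpha,\beta,\gamma,\delta)$ and $(\alpha',\beta',\gamma',\delta')$ whose \texttt{LV} vector fields coincide and subtract them: the difference field $\big((\alpha-\alpha')\,x - (\beta-\beta')\,xy,\ (\delta-\delta')\,xy - (\gamma-\gamma')\,y\big)$ must vanish on the support of $\rho_\X$. Then I would invoke linear independence of monomials: on any subset of $\R^2$ not contained in the zero locus of a nonzero polynomial (\eg any set with nonempty interior, in particular the positive orthant natural for population models), the functions $x$ and $xy$ are linearly independent, and likewise $y$ and $xy$. Hence vanishing of the first component forces $\alpha=\alpha'$ and $\beta=\beta'$, and vanishing of the second forces $\gamma=\gamma'$ and $\delta=\delta'$, which gives the claimed uniqueness.

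The hard part here is essentially nonexistent: the only point deserving care is to pin down the domain and measure on which the two vector fields are identified, and to check that it is rich enough to run the monomial-independence step (otherwise one could only recover the restriction of $f$ to, say, a lower-dimensional set, which would not pin down all four coefficients). Everything else is a one-line linear-algebra computation. An entirely analogous argument, separating the monomials that appear in the corresponding vector field, will establish the companion injectivity statement for \texttt{NS} in \Cref{lemma:injectivity_ns}.
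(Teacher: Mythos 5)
Your proposal is correct and follows essentially the same route as the paper: both reduce uniqueness to injectivity of the linear parametrization map $\psi$, i.e.\ to showing that the monomials $x, xy$ (resp.\ $y, xy$) appearing in each component are linearly independent, so the kernel of $\psi$ is trivial. Your explicit remark that the domain must be rich enough (not contained in the zero locus of a nonzero polynomial) is a sensible precaution that the paper's proof glosses over by quantifying over all $(x,y)$, but it does not change the substance of the argument.
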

\vspace{-1em}
\begin{proof}
    With $\psi: c \triangleq (\alpha, \beta, \delta, \gamma) \mapsto [(\begin{array}{l} x\\ y \end{array})  \mapsto (\begin{array}{l} {\alpha} {x} - {\beta} {x y}\\ {\delta} {x y} - {\gamma} {y} \end{array})]$
    a surjective linear mapping from $\mathbb{R}^4$ to $\F$ (all \texttt{LV} systems are parametrized).
    Injectivity of $\psi$ \ie $\psi(c_1)=\psi(c_2) \iff c_1=c_2$ will imply bijectivity \ie unicity of parameters for a \texttt{LV} system. 
    As $\psi$ is linear, injectivity is equivalent to $\psi(c)=0 \iff c=0$, shown below:
    \begin{align*}
        \psi(c)=0 &\iff
        \forall \Big(\begin{array}{l} x\\ y \end{array}\Big), \Big(\begin{array}{l} {x} ({\alpha} - {\beta} y) \\
        ({\delta} {x} - {\gamma}) {y} \end{array}\Big)  = \Big(\begin{array}{l} 0\\ 0 \end{array}\Big) \\
        &\iff \forall \Big(\begin{array}{l} x\\ y \end{array}\Big), \Big(\begin{array}{l} {\alpha} - {\beta} y \\
        {\delta} {x} - {\gamma} \end{array}\Big)  = \Big(\begin{array}{l} 0\\ 0 \end{array}\Big) \\
        &\iff c = (\alpha, \beta, \delta, \gamma) = (0,0,0,0)
    \end{align*}
\end{proof}

\begin{lemma}
    There is an unique set of parameters in $\mathbb{R}^{d+1}$, where $d$ is the grid size, for a Navier-Stokes (\texttt{NS}) system.
    \label{lemma:injectivity_ns}
\end{lemma}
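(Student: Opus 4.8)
The plan is to follow the template of \Cref{lemma:injectivity_lv}: exhibit the linear parametrization map $\psi$ from the $(d+1)$-dimensional parameter space onto the class $\F$ of Navier--Stokes vector fields, note that it is surjective by construction, and then prove injectivity, which upgrades it to a bijection and hence yields uniqueness of the parameters. Concretely, the 2D incompressible Navier--Stokes dynamics in vorticity form, discretized on a grid of size $d$, reads $f^{(\nu,\phi)}(w) = -u(w)\cdot\nabla w + \nu\,\Delta w + \phi$, where $w\in\mathbb{R}^d$ is the vorticity field, $u(w)$ is the associated divergence-free velocity recovered from $w$ by a fixed linear operator (streamfunction / Biot--Savart solve), $\Delta$ is the discrete Laplacian, $\nu\in\mathbb{R}$ is the viscosity and $\phi\in\mathbb{R}^d$ is the (time-independent) forcing; I write $\phi$ for the forcing to avoid clashing with the vector-field symbol $f$. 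The advection term $-u(w)\cdot\nabla w$ does not depend on $(\nu,\phi)$, so $f^{(\nu,\phi)}$ is a fixed nonlinear function of $w$ plus the map $\psi:(\nu,\phi)\mapsto\big[\,w\mapsto \nu\,\Delta w + \phi\,\big]$, which is linear in $(\nu,\phi)\in\mathbb{R}\times\mathbb{R}^d\cong\mathbb{R}^{d+1}$; surjectivity of $\psi$ onto the varying part of $\F$ is immediate since every system in the family has exactly this form.

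The heart of the argument is injectivity of $\psi$, which, as in the \texttt{LV} case and because $\psi$ is linear, reduces to showing $\psi(\nu,\phi)=0 \Rightarrow (\nu,\phi)=(0,0)$. Suppose $\nu\,\Delta w + \phi = 0$ for every grid field $w$. Evaluating at $w=0$ gives $\phi = 0$, using that $u(0)=0$ (the vorticity-to-velocity map is linear) and $\Delta 0 = 0$; there remains $\nu\,\Delta w = 0$ for all $w$. Taking any non-constant grid field $w_0$ — the kernel of the discrete Laplacian on a connected periodic grid is exactly the constants — we get $\Delta w_0 \neq 0$ in at least one coordinate, hence $\nu = 0$. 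Thus $\psi$ is injective, therefore bijective onto its image, so the parameter vector $(\nu,\phi)\in\mathbb{R}^{d+1}$ of a Navier--Stokes system is unique, which is the claim; equivalently, $\phi$ is read off from $f^{(\nu,\phi)}(0)$ and then $\nu$ from $f^{(\nu,\phi)}(w_0)$.

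The only delicate points — the closest thing to an obstacle — are the two structural facts invoked at $w=0$ and $w=w_0$: that the vorticity-to-velocity solve is linear (so it maps the zero field to the zero field, giving $f^{(\nu,\phi)}(0)=\phi$ exactly) and that the discrete Laplacian is not the zero operator on grid fields. Both are standard. One should additionally reconcile this pointwise evaluation with the functional formulation of $\Loss$ used elsewhere, where vector fields are compared in $L^2(\rho_{\gX})$ (as in the proof of \Cref{prop:lr_linearit}): provided $\rho_{\gX}$ has full support, equality $\rho_{\gX}$-almost everywhere of two continuous vector fields forces pointwise equality, so evaluation at $0$ and at $w_0$ is legitimate and the argument carries over verbatim. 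With uniqueness established, \Cref{ass:unicity_param} holds for \texttt{NS}, exactly as \Cref{lemma:injectivity_lv} establishes it for \texttt{LV}.
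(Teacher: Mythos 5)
Your proof is correct and follows essentially the same route as the paper's: exhibit the linear parametrization $\psi$, reduce uniqueness to injectivity, and deduce $(\nu,\phi)=(0,0)$ from $\nu\,\Delta w+\phi=0$ for all $w$. You merely make explicit two steps the paper leaves implicit (separating off the $(\nu,f)$-independent advection term so that $\psi$ is genuinely linear, and evaluating at $w=0$ and at a non-constant $w_0$ with $\Delta w_0\neq 0$), which is a welcome but inessential refinement.
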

\begin{proof}
    With $\psi: c \triangleq (\nu, f) \mapsto \Big[w  \mapsto -v \nabla w + \nu \Delta w + f\Big]$, a surjective linear mapping from $\mathbb{R}^{d+1}$ to $\F$ (all \texttt{NS} systems are parametrized), bijectivity of $\psi$ is induced by injectivity \ie $\psi(c_1)=\psi(c_2) \iff c_1=c_2$, shown below:
    \begin{align*}
        &\psi(c_1)=\psi(c_2) \\
        \iff& \forall w,  -v \nabla w + \nu_1 \Delta w + f_1 =  -v \nabla w + \nu_2 \Delta w + f_2\\
        \iff& \forall w,  (\nu_1 - \nu_2) \Delta w =  -(f_1-f_2)\\
        \iff& (\nu_1, f_1) = (\nu_2, f_2) \iff c_1=c_2
    \end{align*}
\end{proof}

\section{Low-Rank Assumption}
\label{app:low-rank-nonlinear}
When the systems are nonlinearly parametrized, we show empirically with \Cref{fig:eigenvalues} that the low-rank assumption is still reasonable for two different systems.
\begin{figure*}[t]
    \centering
    \subfloat[\texttt{GO}: $k_1$ and $K_1$ vary across $\E$. \label{fig:eigenvalues_go}]{\includegraphics[width=0.66\linewidth]{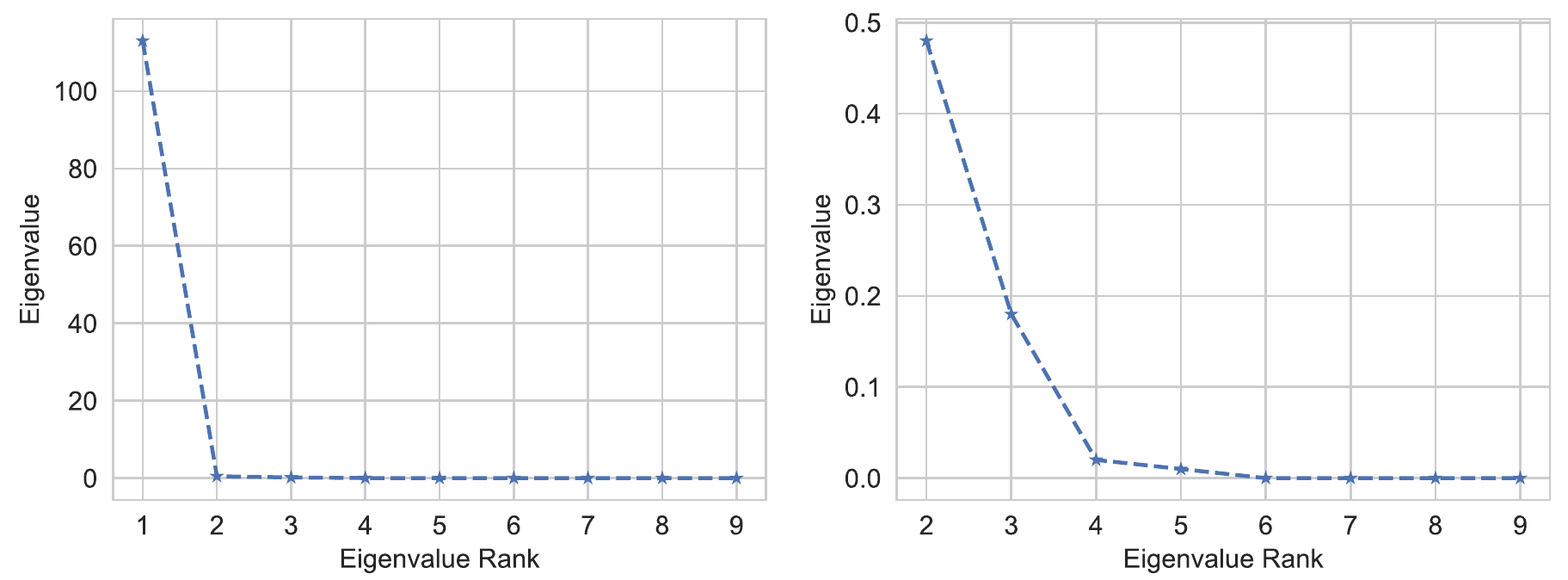}}
    \subfloat[\texttt{Sin}. \label{fig:eigenvalues_sin}]{\includegraphics[width=0.33\linewidth]{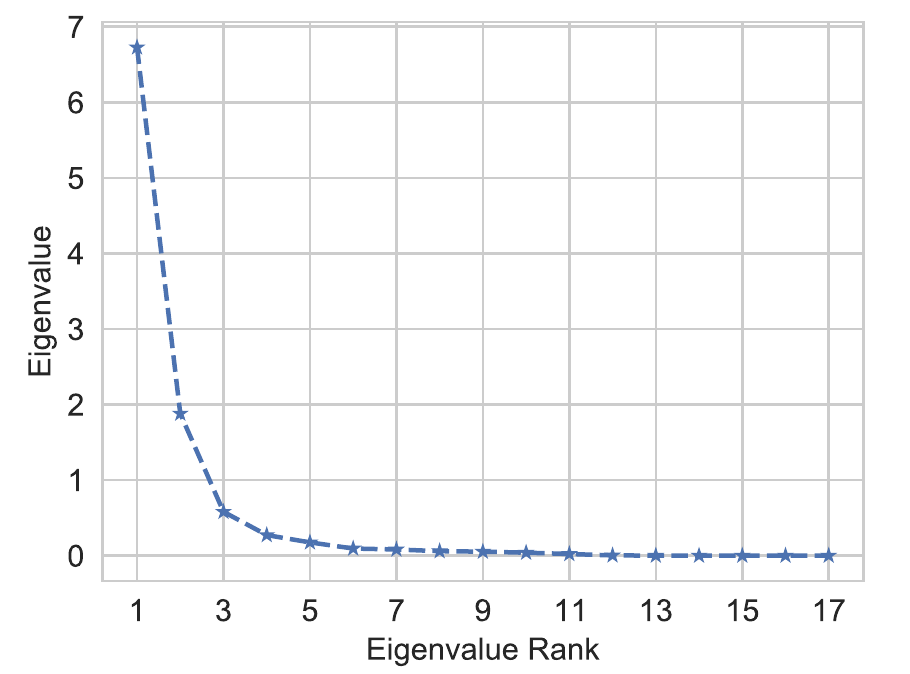}}
    \caption{Ranked singular values of the gradients across environments $\Etr$, $\G_{\theta^c}$ for \ourslone.}
    \label{fig:eigenvalues}
    \vskip -0.2in
\end{figure*}
\paragraph{Glycolitic-Oscillator (\texttt{GO})}
We consider the Glycolitic-Oscillator system (\texttt{GO}), described in \Cref{app:dynsys}, which is nonlinear \wrt $K_1$.
We vary parameters $k_1, K_1$ in \Cref{eq:glycolitic} across environments.
We observe in \Cref{fig:eigenvalues_go} that there are three main gradient directions with SVD. 
The first is the most significant one while the second and third ones are orders of magnitude smaller.

\paragraph{Sinusoidal (\texttt{Sin})}
We consider a sinusoidal family of functions $S(n)= \{f:\mathbb{R}\rightarrow \mathbb{R}| f(x)= \sum_{i=1}^N \lambda_i \sin(\omega_i x + \phi_i)\}$ (\texttt{Sin}).
We sample 20 environments that correspond each to different amplitudes (uniformly sampled in $[0,1]$), frequencies (uniformly sampled in $[0,10]$)) and phases (uniformly sampled in $[0, 3.14]$).
We depict in \Cref{fig:eigenvalues_sin} the evaluation of the singular values at initialization.
\Cref{fig:eigenvalues_sin} shows that the number of directions to consider for convergence is small and that a single direction accounts for a significant amount of the variance in the gradients.
This corroborates the low-rank assumption.

\section{Locality Constraint}
\label{app:locality_upper_bound}
We derive the upper-bounds to $\|\cdot\|$ for two variations.

$\|\cdot\|{}=\ell_2$: we apply triangle inequality to obtain $\Omega=\ell_2^2$
$$
    \|W\xi^e\|_2^2{}\leq\|W\|_2^2\|\xi^e\|_2^2
$$

$\|\cdot\|{}=\ell_1$: we apply Cauchy-Schwartz inequality to obtain $\Omega(W){}=\ell_{1,2}(W)\triangleq\sum_{i=1}^{d_\theta}\|W_{i,:}\|_2$
$$
\|W\xi^e\|_1{}=\sum_{i=1}^{d_\theta}|W_{i,:}\xi^e|{}\leq\|\xi^e\|_2\sum_{i=1}^{d_\theta}\|W_{i,:}\|_2
$$
\Cref{eq:reg} minimizes the $\log$ of the above upper-bounds.

\section{Experimental Settings}
\label{app:experimental_setting}
We present in \Cref{app:dynsys} the equations and the data generation specificities for all considered dynamical systems.

\subsection{Dynamical Systems}
\label{app:dynsys}
\paragraph{Lotka-Volterra (\texttt{LV}, \citealp{Lotka1925})} The system describes the interaction between a prey-predator pair in an ecosystem, formalized into the following ODE:
\begin{equation}
    \begin{aligned}
        \dfrac{\diff x}{\diff t} &= {\alpha} {x} - {\beta} {x y} \\
        \dfrac{\diff y}{\diff t} &= {\delta} {x y} - {\gamma} {y}
    \end{aligned}
    \label{eq:lv}
\end{equation}
where $x, y$ are respectively the quantity of the prey and the predator, $\alpha, \beta, \delta, \gamma$ define how two species interact. 

We generate trajectories on a temporal grid with $\Delta t = 0.5$ and temporal horizon $T=10$. 
We sample on each training environment $N_{\text{tr}}=4$ initial conditions for training from a uniform distribution $p(X_0) = \operatorname{Unif}([1, 3]^2)$. 
We sample for evaluation $32$ initial conditions from $p(X_0)$.
Across environments, $\alpha=0.5, \gamma=0.5$.
For training, we consider $\#\Etr=9$ environments with parameters $\beta, \delta \in \{0.5, 0.75, 1.0\}^2$.
For adaptation, we consider $\#\Ead=4$ environments with parameters $\beta, \delta \in \{0.625, 1.125\}^2$.

\paragraph{Glycolytic-Oscillator (\texttt{GO}, \citealp{Daniels2015})} 
\texttt{GO} describes yeast glycolysis dynamics with the ODE:
\begin{equation}
    \begin{aligned}
        \dfrac{\diff S_1}{\diff t} &= J_0 - \dfrac{k_1 S_1 S_6}{1 + (1/K_1^q) S_6^q} \\
        \dfrac{\diff S_2}{\diff t} &= 2 \dfrac{k_1 S_1 S_6}{1 + (1/K_1^q) S_6^q} - k_2 S_2(N-S_5)-k_6 S_2 S_5 \\
        \dfrac{\diff S_3}{\diff t} &= k_2 S_2 (N-S_5) - k_3 S_3(A-S_6) \\
        \dfrac{\diff S_4}{\diff t} &= k_3 S_3 (A-S_6) - k_4 S_4 S_5 - \kappa (S_4 - S_7) \\
         \dfrac{\diff S_5}{\diff t} &= k_2 S_2 (N-S_5) - k_4 S_4 S_5 - k_6 S_2 S_5 \\
        \dfrac{\diff S_6}{\diff t} &= -2 \dfrac{k_1 S_1 S_6}{1 + (1/K_1^q) S_6^q} + 2 k_3 S_3 (A - S_6) - k_5 S_6 \\
        \dfrac{\diff S_7}{\diff t} &= \psi \kappa (S_4 - S_7) - k S_7
    \end{aligned}
    \label{eq:glycolitic}
\end{equation}
where $S_1, S_2, S_3, S_4, S_5, S_6, S_7$ represent the concentrations of 7 biochemical species.
We generate trajectories on a temporal grid with $\Delta t = 0.05$ and temporal horizon $T=1$. 
We sample on each training environment $N_{\text{tr}}=32$ initial conditions for training from a uniform distribution $p(X_0)$ defined in Table 2 in \citep{Daniels2015}. 
Across environments, $J_0=2.5, k_2=6, k_3=16, k_4=100, k_5=1.28, k_6=12, q=4, N=1, A=4, \kappa=13, \psi=0.1, k=1.8$.
For training, we consider $\#\Etr=9$ environments with parameters $k_1 \in \{100, 90, 80\}, K_1 \in \{1, 0.75, 0.5\}$.
For adaptation, we consider $\#\Ead=4$ environments with parameters $k_1 \in \{85, 95\}, K_1 \in \{0.625, 0.875\}$.

\paragraph{Gray-Scott (\texttt{GS}, \citealp{Pearson1993})}
The PDE descibes a reaction-diffusion system with complex spatiotemporal patterns through the following 2D PDE:
\begin{equation}
    \begin{aligned}
        \dfrac{\partial u}{\partial t} &= D_u \Delta u -u v^2 + F (1-u) \\
        \dfrac{\partial v}{\partial t} &= D_v \Delta v + u v^2 - (F + k) v
    \end{aligned}
    \label{eq:gray}
\end{equation}
where $u, v$ represent the concentrations of two chemical components in the spatial domain $S$ with periodic boundary conditions. 
$D_u, D_v$ denote the diffusion coefficients respectively for $u, v$ and $F, k$ are the reaction parameters. 

We generate trajectories on a temporal grid with $\Delta t = 40$ and temporal horizon $T=400$. 
$S$ is a 2D space of dimension 32$\times$32 with spatial resolution of $\Delta s=2$.
We define initial conditions $(u_0,v_0)\sim p(X_0)$ by uniformly sampling three two-by-two squares in $S$.
These squares trigger the reactions.
$(u_0,v_0)=(1-\epsilon, \epsilon)$ with $\epsilon=0.05$ inside the squares and $(u_0,v_0)=(0, 1)$ outside the squares.
We sample on each training environment $N_{\text{tr}}=1$ initial conditions for training. 
Across environments, $D_u = 0.2097, D_v = 0.105$.
For training, we consider $\#\Etr=4$ environments with parameters $F \in \{0.30, 0.39\}, k \in \{0.058, 0.062\}$.
For adaptation, we consider $\#\Ead=4$ environments with parameters $F \in \{0.33, 0.36\}, k \in \{0.59, 0.61\}$.

\paragraph{Navier-Stokes (\texttt{NS}, \citealp{stokes1851effect})} 
\texttt{NS} describes the dynamics of incompressible flows with the 2D PDE:
\begin{equation}
    \begin{aligned}
        \dfrac{\partial w}{\partial t} &= -v \nabla w + \nu \Delta w + f \text{~where~} w = \nabla \times v\\ 
        \nabla v &= 0 \\ 
    \end{aligned}
    \label{eq:navier}
\end{equation}
where $v$ is the velocity field, $w=\nabla \times v$ is the vorticity. 
Both $v, w$ lie in a spatial domain $S$ with periodic boundary conditions, $\nu$ is the viscosity and $f$ is the constant forcing term in the domain $S$. 
We generate trajectories on a temporal grid with $\Delta t = 1$ and temporal horizon $T=10$. 
$S$ is a 2D space of dimension 32$\times$32 with spatial resolution of $\Delta s=1$.
We sample on each training environment $N_{\text{tr}}=16$ initial conditions for training from $p(X_0)$ as in \citet{Li2021}.
Across environments, $f(X,Y)= 0.1 (\sin(2 \pi (X + Y)) + \cos(2 \pi (X + Y)))$.
For training, we consider $\#\Etr=5$ environments with parameters $\nu \in \{8\cdot 10^{-4}, 9\cdot 10^{-4}, 1.0\cdot 10^{-3}, 1.1\cdot 10^{-3}, 1.2\cdot 10^{-3}\}$.
For adaptation, we consider $\#\Ead=4$ environments with parameters $\nu \in \{8.5\cdot 10^{-4}, 9.5\cdot 10^{-4}, 1.05\cdot 10^{-3}, 1.15\cdot 10^{-3}\}$.

\subsection{Implementation and Hyperparameters}
\label{app:implem_hyperparam}

\paragraph{Architecture}
We implement the dynamics model $g_\theta$ with the following architectures: 
\begin{itemize}
    \item \texttt{LV}, \texttt{GO}: 4-layer MLPs with hidden layers of width 64.
    \item \texttt{GS}: 4-layer ConvNet with 64-channel hidden layers, and $3\times 3$ convolution kernels
    \item \texttt{NS}: Fourier Neural Operator \cite{Li2021} with 4 spectral convolution layers. 12 frequency modes and hidden layers with width 10.
\end{itemize}
We apply Swish activation \citep{Ramachandran2018}. 
The hypernet $A$ is a single affine layer NN.

\paragraph{Optimizer}
We use the Adam optimizer \citep{Kingma2015} with learning rate $10^{-3}$ and $\left(\beta_{1}, \beta_{2}\right)=(0.9,0.999)$. 
We apply early stopping.
All experiments are performed with a single NVIDIA Titan Xp GPU on an internal cluster. 
We distribute training by batching together predictions across trajectories to reduce running time.
States across batch elements are concatenated.

\paragraph{Hyperparameters}
We define hyperparameters for the following models:
\begin{enumerate*}[label=(\alph*)]
    \item \ours: 
    \begin{itemize*}
    \item \texttt{LV}: $\lambda_{\xi}=10^{-4}$, $\lambda_{\ell_{1}}=10^{-6}$, $\lambda_{\ell_{2}}=10^{-5}$
    \item \texttt{GO}: $\lambda_{\xi}=10^{-3}$, $\lambda_{\ell_{1}}=10^{-7}$, $\lambda_{\ell_{2}}=10^{-7}$
    \item \texttt{GS}: $\lambda_{\xi}=10^{-2}$, $\lambda_{\ell_{1}}=10^{-5}$, $\lambda_{\ell_{2}}=10^{-5}$
    \item \texttt{NS}: $\lambda_{\xi}=10^{-3}$, $\lambda_{\ell_{1}}=2\cdot 10^{-3}$, $\lambda_{\ell_{2}}= 2\cdot 10^{-3}$
    \end{itemize*}
    \item LEADS: we use the same parameters as \citet{Yin2021LEADS}.
    \item GBML: the outer-loop learning rate is $10^{-3}$, we apply 1-step inner-loop update for training and adaptation to maintain low running times. The inner-loop learning rate for each system is: \begin{itemize*}
    \item \texttt{LV}: $0.1$
    \item \texttt{GO}: $0.01$
    \item \texttt{GS}: $10^{-3}$
    \item \texttt{NS}: $10^{-3}$
    \end{itemize*}. These values are also used to initialize the per-parameter inner-loop learning rate in Meta-SGD.
\end{enumerate*}

\section{Trajectory Prediction Visualization}
\label{app:visualization}
We visualize in \Cref{fig:gs_traj,fig:ns_traj} the prediction MSE by MAML, LEADS, CAVIA-Concat and \ourslone along ground truth trajectories on the PDE systems \texttt{NS} and \texttt{GS}.
We consider a new test trajectory on an \textit{Adaptation} environment $e\in\Ead$ with parameters defined in the caption.

\newcolumntype{P}[1]{>{\centering\arraybackslash}p{#1}}
\newcolumntype{M}[1]{>{\centering\arraybackslash}m{#1}}
\begin{figure*}
    \centering
    \fontfamily{Cabin-TLF}\selectfont
    \setlength{\tabcolsep}{6pt}
    \renewcommand{\arraystretch}{5}
    \tiny
    \begin{tabular}{M{0.8cm}c}
        Ground Truth & \includegraphics[align=c,width=0.8\linewidth]{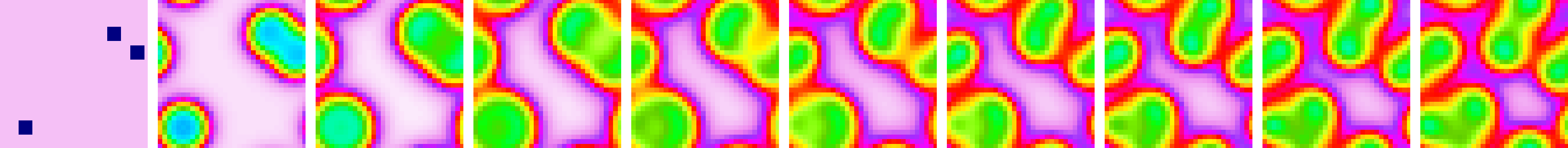} \\
        MAML & \includegraphics[align=c,width=0.8\linewidth]{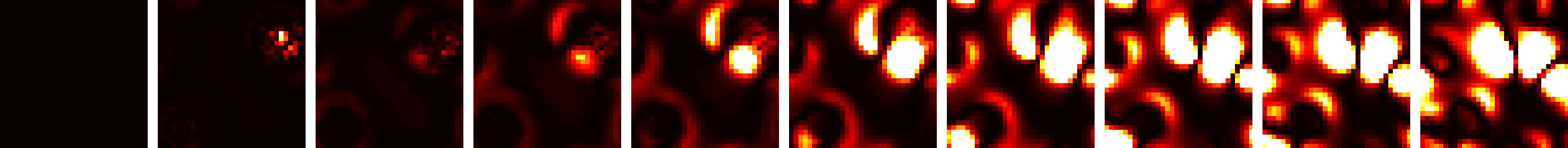} \\
        LEADS & \vspace{-0.5em}\includegraphics[align=c,width=0.8\linewidth]{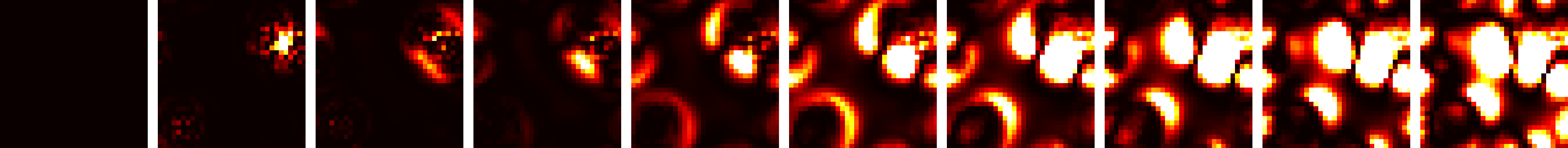} \\
        CAVIA-Concat & \vspace{-0.5em}\includegraphics[align=c,width=0.8\linewidth]{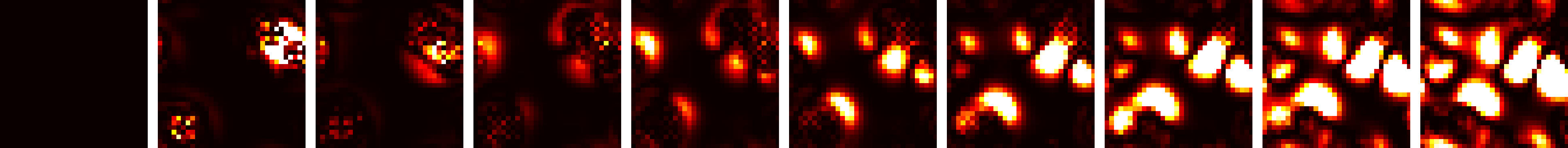} \\
        CoDA (\textit{Ours}) & \includegraphics[align=c,width=0.8\linewidth]{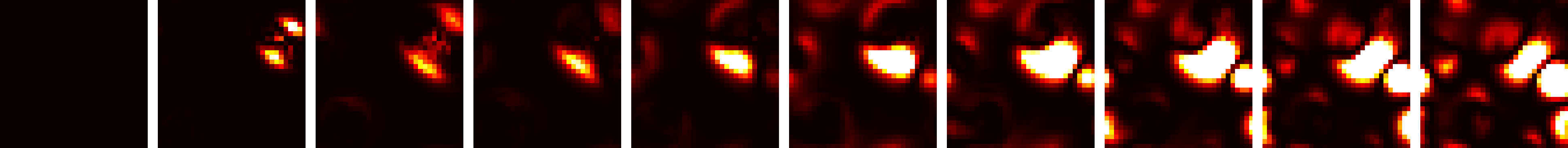}
    \end{tabular}
    \caption{Adaptation to new \texttt{GS} system - $(F,k,D_u,D_v)=(0.033, 0.061, 0.2097, 0.105)$. Ground-truth trajectory and prediction MSE per frame for MAML, LEADS, CAVIA-Concat and CoDA.}
    \label{fig:gs_traj}
\end{figure*}
\begin{figure*}
    \centering
    \fontfamily{Cabin-TLF}\selectfont
    \setlength{\tabcolsep}{6pt}
    \renewcommand{\arraystretch}{5}
    \tiny
    \begin{tabular}{M{0.8cm}c}
        Ground Truth & \includegraphics[align=c,width=0.8\linewidth]{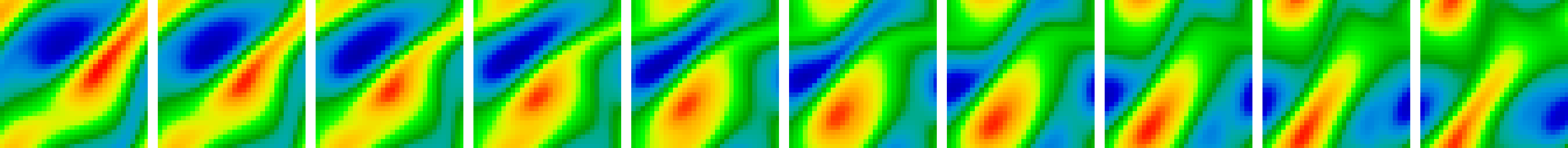} \\
        MAML & \includegraphics[align=c,width=0.8\linewidth]{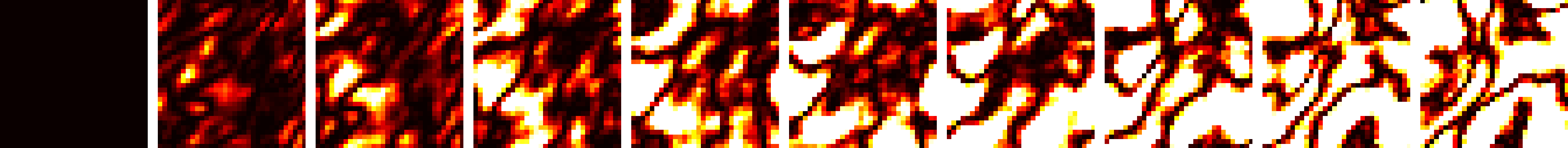} \\
        LEADS & \vspace{-0.5em}\includegraphics[align=c,width=0.8\linewidth]{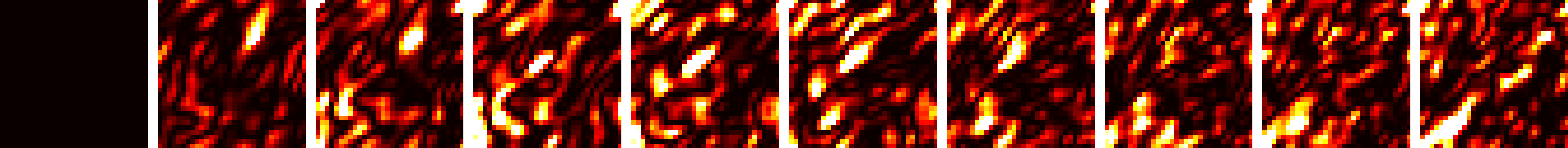} \\
        CAVIA-Concat & \vspace{-0.5em}\includegraphics[align=c,width=0.8\linewidth]{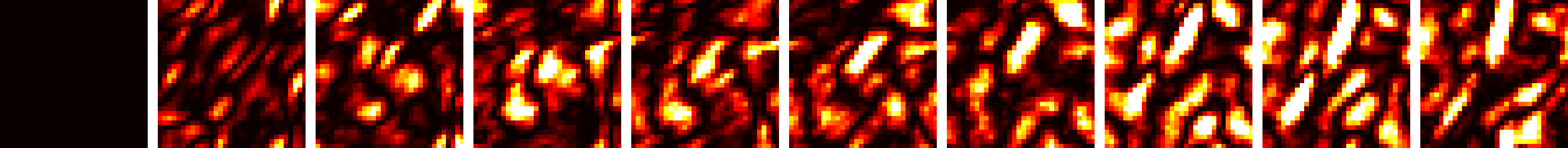}\\
        CoDA (\textit{Ours}) & \includegraphics[align=c,width=0.8\linewidth]{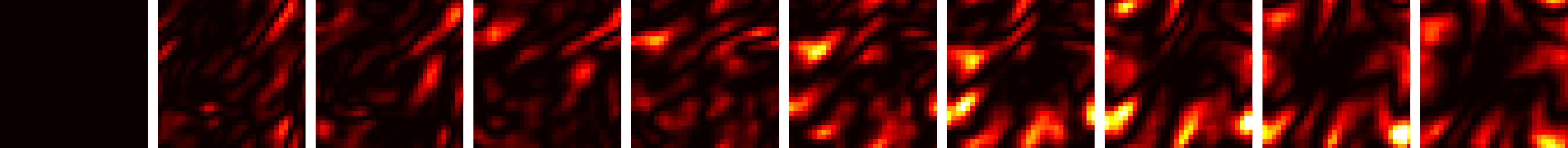}
    \end{tabular}
    \caption{Adaptation to new \texttt{NS} system - $\nu=1.15\cdot 10^{-3}$. Ground-truth trajectory and prediction MSE per frame for MAML, LEADS, CAVIA-Concat and CoDA.}
    \label{fig:ns_traj}
\end{figure*}

\end{document}